\documentclass{article} 
\usepackage[margin=1.2in]{geometry}

\usepackage{amsmath,amsfonts,bm}









\def\eqref#1{equation~\ref{#1}}









\def\1{\bm{1}}










\DeclareMathAlphabet{\mathsfit}{\encodingdefault}{\sfdefault}{m}{sl}
\SetMathAlphabet{\mathsfit}{bold}{\encodingdefault}{\sfdefault}{bx}{n}













\usepackage{hyperref}
\usepackage{url}

\usepackage{amsmath}
\usepackage{amssymb}
\usepackage{amsthm}
\usepackage{graphicx}
\usepackage{adjustbox} 
\usepackage{subcaption}
\usepackage{etoolbox}
\usepackage[utf8]{inputenc} 
\usepackage[T1]{fontenc}    
\usepackage{hyperref}       
\usepackage{url}            
\usepackage{booktabs}       
\usepackage{amsfonts}       
\usepackage{nicefrac}       
\usepackage{microtype}      
\usepackage{xcolor}  
\usepackage{authblk}
\usepackage{graphicx}
\DeclareRobustCommand{\lowstar}[1][0.9ex]{\raisebox{#1}{\scriptsize\textasteriskcentered}}

\newrobustcmd\B{\DeclareFontSeriesDefault[rm]{bf}{b}\bfseries}

\newtheorem{proposition}{Proposition}
\newtheorem{definition}{Definition}


\newcommand{\bb}[1]{\mathbb{#1}}

\newcommand{\EE}{\bb{E}}

\title{On Uniformly Scaling Flows: A Density-Aligned Approach to Deep One-Class Classification}

\author[ ]{Faried {Abu Zaid}\thanks{Equal contribution.}}
\author[1]{Tim Katzke\lowstar}
\author[1]{Emmanuel Müller}
\author[1]{Daniel Neider}
\affil[1]{Research Center Trustworthy Data Science and Security, TU Dortmund University}

\affil[ ]{\texttt{fariedaz@gmail.com}, \texttt{tim.katzke@tu-dortmund.de}}

\begin{document}
\maketitle

\begin{abstract}
Unsupervised anomaly detection is often framed around two widely studied paradigms. Deep one-class classification, exemplified by Deep SVDD, learns compact latent representations of normality, while density estimators realized by normalizing flows directly model the likelihood of nominal data. In this work, we show that uniformly scaling flows (USFs), normalizing flows with a constant Jacobian determinant, precisely connect these approaches. Specifically, we prove how training a USF via maximum‑likelihood reduces to a Deep SVDD objective with a unique regularization that inherently prevents representational collapse. This theoretical bridge implies that USFs inherit both the density faithfulness of flows and the distance-based reasoning of one-class methods. We further demonstrate that USFs induce a tighter alignment between negative log‑likelihood and latent norm than either Deep SVDD or non‑USFs, and how recent hybrid approaches combining one-class objectives with VAEs can be naturally extended to USFs. Consequently, we advocate using USFs as a drop‑in replacement for non‑USFs in modern anomaly detection architectures. Empirically, this substitution yields consistent performance gains and substantially improved training stability across multiple benchmarks and model backbones for both image‑level and pixel‑level detection. These results unify two major anomaly detection paradigms, advancing both theoretical understanding and practical performance.

\end{abstract}

\section{Introduction}
Unsupervised anomaly detection seeks to identify rare deviations from normality without access to labeled outliers and underpins safety‑critical applications in medical diagnostics, cybersecurity, and industrial inspection~\cite{fernando2021deep,LiuXWLWZJ24}. For such complex underlying data, performance and reliability hinge on how well “normal” structure is captured in a latent space, where typical samples form a compact, semantically meaningful region, so that unlikely patterns can be identified.
Estimating the data’s true underlying density provides a principled foundation to accomplish this task~\cite{RuffKVMSKDM21}.

Within the landscape of leading deep unsupervised anomaly detection methods, there exist two seemingly complementary lines: one‑class classification --- epitomized by Deep SVDD~\cite{RuffGDSVBMK18} --- which forgoes explicit density mapping and learns encoders that concentrate normal data into a minimal‑volume support (e.g., a hypersphere), and distribution maps, which estimate a distribution over embeddings~\cite{LiuXWLWZJ24}. Among the latter, a predominant example are normalizing flows, generative density estimator models which learn bijective transformations between the complex data space and simple base distributions and enable exact likelihoods~\cite{papamakarios2021normalizing}. Figure~\ref{fig:schematic} schematically contrasts hyperspherical one‑class mapping against a normalizing flow with isotopic gaussian base distribution. While one‑class methods offer geometric simplicity and efficiency, the risk trivial solutions, hypersphere collapse, and limited density fidelity~\cite{ChongRKB20,KluettermannKM25,ZhangD21}. Flows provide explicit likelihoods but their scores can be confounded by input‑dependent log‑determinant terms and higher computational cost~\cite{ liao_jacobian_2021,papamakarios2021normalizing}. Still, despite different mechanisms, both lines aim to delineate a high‑mass region of normality, while a rigorous account of when and how exactly their objectives coincide and how to leverage this in practice has been missing.

We close this gap by introducing a theoretically grounded bridge via uniformly scaling flows (USFs) --- normalizing flows with a constant Jacobian determinant --- by showing that USFs are a special case of a Deep SVDDs. In particular, we formally establish that
USFs trained via standard maximum likelihood estimation (MLE) are mathematically equivalent to Deep SVDD objectives with a distinctive regularization scheme. Since this regularization directly
relates to the determinant of the Jacobian matrix, it penalizes volume collapse in a way that provably circumvents the well-known degeneracy issues of standard deep one-class classification. Motivated by these properties, we advocate USFs as drop‑in replacements for state-of-the-art, non-USF‑based anomaly detectors. Eliminating input‑dependent log‑determinant terms makes scores depend only on position in the base latent, yielding more predictable level‑set geometry and empirically showing consistent gains across standard industrial AD benchmarks in both image‑ and pixel‑level detection with improved performance stability. 

The remainder of this paper is structured as follows. \S2 reviews the necessary background on anomaly detection, Deep SVDD, and normalizing flows, formally defining uniformly scaling flows. \S3 situates our work within the broader landscape of related research on deep one-class classification and flow-based anomaly detection. Our core theoretical contribution is presented in \S4, where we prove the equivalence between maximum likelihood training of a USF and a regularized Deep SVDD objective, formally bridging the two paradigms. Additionally, we analyze the advantages of USFs with respect to common Deep SVDD and non-USF failure modes theoretically and empirically. Motivated by these theoretical advantages, \S5 presents an extensive empirical study where we replace standard flows with USFs in modern architectures (FastFlow, CFlow, U-Flow), showing consistent performance gains and significantly improved stability across the MVTec AD and VisA benchmarks. Finally, \S6 concludes.

\begin{figure}[tp!]
  \centering
  \includegraphics[width=\textwidth]{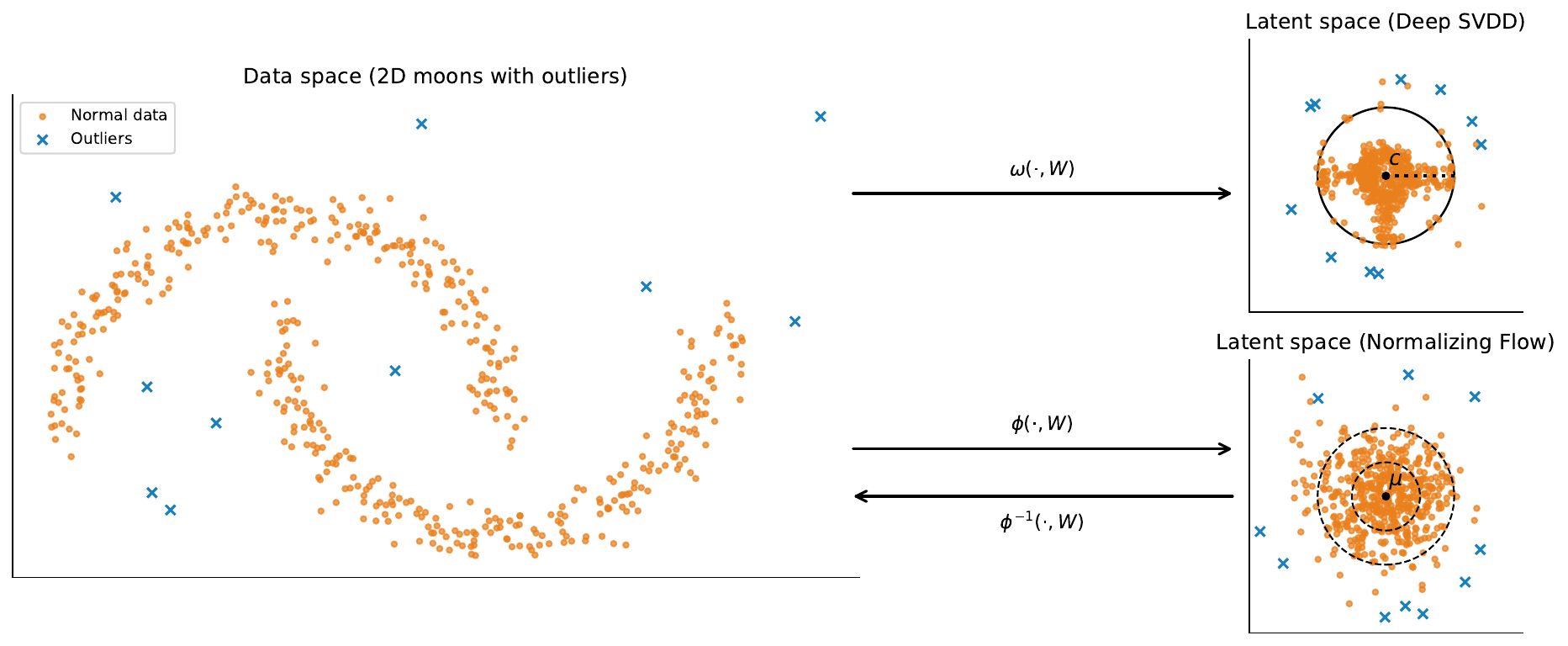}
  \caption{Schematic comparison of Deep SVDD and Normalizing Flows. The left panel shows the input data space: a nonlinearly structured 2D “moons” dataset with outliers. Top-right depicts Deep SVDD’s latent space: a learned mapping $\omega(\cdot, W)$ concentrates normal samples near a center $c$ inside a hypersphere, while anomalies lie outside; the non-uniform distribution emphasizes representation bias. Bottom-right shows the Normalizing Flows latent space: an invertible mapping $\phi(\cdot, W)$ moves data towards a standard Gaussian (illustrated by 1$\sigma$ and 2$\sigma$ contours) and relegates outliers to low-density regions, with $\phi^{-1}(\cdot, W)$ indicating the reverse mapping.}
  \label{fig:schematic}
\end{figure}

\section{Preliminaries}\label{sec:preliminaries}

To first define unsupervised anomaly detection formally, let $X$ be a random variable governed by a probability density function $p_X(x)$ that models the true distribution of normal data. An observation $x$ is deemed anomalous if its likelihood under this distribution falls below a predefined threshold $\tau$, i.e.,
\begin{align}
  \label{eq:anomaly_detection}
    \{\,x \mid p_X(x) < \tau\}.
\end{align}

This framing characterizes anomalies as those points residing in the low–probability tails of the normal data distribution, consistent with the definition that "an anomaly is an observation that deviates considerably from some concept of normality"~\cite{RuffKVMSKDM21}. One can transform this likelihood criterion into an anomaly score by considering the Shannon information, $-\ln p_X(x)$ (basis $e$, nats).

\paragraph{Deep SVDD}

Rather than estimating the true density \(p_X(x)\) and applying a threshold test as in Equation~\ref{eq:anomaly_detection}, Deep Support Vector Data Description (Deep SVDD)~\cite{RuffGDSVBMK18} casts anomaly detection as a one‐class classification problem: it learns a mapping s.t. the latent representations of normal samples are confined within a minimal‐radius hypersphere around a fixed center point.

To this end, Deep SVDD parameterizes a deep neural network $\omega(x,w)\colon X\to\mathbb{R}^d$ with layer weights $w=(w^1,\dots,w^L)$ and learns these weights by optimizing the one-class loss
\begin{align}\label{eqn:deepsvdd_loss}
\min_{w}\;\frac{1}{n}\sum_{i=1}^n \lVert \omega(x_i,w) - c\rVert^2
\;+\;\frac{\lambda}{2}\sum_{l=1}^L\lVert w^l\rVert^2_F
\end{align}
where $\{x_i\}_{i=1}^n$ are the normal training samples and $c\in\mathbb{R}^d$ is the hypersphere center. The second term corresponds to a weight decay regularizer with hyperparameter $\lambda > 0$ where $\lVert \cdot\rVert_F$ denotes the Frobenius norm. To avoid the trivial collapse solution of mapping every data point to the center $c$, usually $c$ is fixed to a vector $\neq 0$, bias terms are omitted and only unbounded activation functions are used. In practice, the network weights $w$ are initialized by pretraining an autoencoder on the normal data and adopting its encoder parameters $w_{\mathrm{init}}$ for $\omega$, after which the center $c$ is set to the mean of $\{\omega(x_i,w_{\mathrm{init}})\}_{i=1}^n$. At inference, the squared distance $\lVert \omega(x,w) - c\rVert^2$ between an input’s latent representation and the hypersphere center is treated as the anomaly measure.

\paragraph{Normalizing Flows}
Flow models are commonly used as density estimators and generative models. 
Formally, a flow $\phi$ transforms a
usually complex data (or target) distribution $X$ into a typically simple base (or latent) distribution $B=\phi(X)$ using a continuous invertible map with continuous inverse,
i.e. a diffeomorphism. The map $\phi_w$ is implemented by an invertible neural network with parameters $w$. We obtain a density estimator and a generative model by applying the flow in both directions:

\begin{enumerate}
  \item Sampling is performed by first sampling $z \sim B$ and then computing
  the inverse map $\phi^{-1}_w(z)$.
  
  \item The likelihood
   $p_w(X=x)$ is computable with the change of variables formula~\cite{folland1999real}: 
  \begin{eqnarray}p_w (X=x) = \left|\det J_{\phi_w}(x) \right| p_B (\phi_w (x)), \label{eqn:var_change}
  \end{eqnarray}
\end{enumerate}
where $J_{\phi_w}(x) = \frac{\partial \phi_w}{\partial x}$ denotes the Jacobi matrix of $\phi_w$. Flows are typically trained to estimate the density of some unknown distribution $X$ by fixing the base distribution and maximizing the log-likelihood of samples from $X$ using Equation \ref{eqn:var_change}. In this case, we write $\hat{p}_X^{w}(x)$ in order to avoid confusion with the true density $p_X$. The by far most common base distribution is a standard Normal $\mathcal{N}(0, I)$, which motivates the name normalizing flow.

While most neural networks architectures are intrinsically differentiable, they typically do not enforce
bijectivity. One needs to design specific architectures
that restrict the hypothesis space to diffeomorphisms. Most discrete (non-ODE) architectures rely on various types of coupling layers \cite{dinh2014nice, Dinh2017, Kingma2018, Midgley2023}. Another crucial building block are bijective affine transforms \cite{hoogeboom2019emerging}, which are often restricted to some subclass like (soft) permutations or householder reflections \cite{dinh2014nice, tomczak2016vae}. 

\paragraph{Uniformly Scaling Flows}

A core contribution of this work is to reveal a profound theoretical connection between the prominent Deep SVDD anomaly detection framework and a well-known subclass of normalizing flows. 
A \emph{uniformly scaling flow (USF)} is a diffeomorphism $\phi: \mathbb{R}^d \to \mathbb{R}^d$ whose Jacobian determinant is constant throughout the input space:
\begin{equation}
\det J_{\phi}(x) = \kappa \quad \forall \mathbf{x} \in \mathbb{R}^d
\end{equation}
for some constant $\kappa \in \mathbb{R}\setminus\{0\}$. This property induces uniform volume scaling across all regions of the latent space. Note that if $\phi=\phi_w$ is parameterized, then $|\det J_{\phi_w}(x)| = \psi_{\text{det}}^\phi(w)$ for some function $\psi_{\text{det}}^\phi$ that depends only on the parameters $w$.

While non-uniformly scaling flow architectures (e.g., RealNVP~\cite{Dinh2017}) have largely superseded uniformly scaling variants like NICE~\cite{dinh2014nice} in anomaly detection architectures, recent theoretical insights demonstrate that USFs retain unique advantages in specialized domains including neuro-symbolic verification~\cite{ZaidNY2024}. 

The USFlows library~\footnote{\url{https://github.com/aai-institute/USFlows}} 
provides modern and robust implementation of USFs via additive coupling layers, adjoint bijective affine group actions parameterized by learnable general LU-decomposed bijective affine transformations, and invertible $1\times1$ convolutions. We will use the USFlows architecture for all our flow experiments in Section~\ref{sec:comparison}. In order to provide fair baseline comparisons, we also implemented a non-US counterpart, NonUSFlows, that is identical to the USFlows architecture, except that additive coupling is replaced by affine coupling, which is non-US \cite{Dinh2017}. 
In the following, we describe the architectures in detail. We use
$m\odot x$ to denote the component wise (Hadamard) product between two tensors of same topology.

\paragraph{USFlows Architecture}
According to \cite{ZaidNY2024}, the USFlow architecture is build from masked additive coupling layers \cite{dinh2014nice, ma_macow_2019} and bijective affine transformations parameterized by LU-decomposed matrices \cite{chan_lunet_2023}. 

A masked additive coupling layer $C: \mathbb{R}^d \to \mathbb{R}^d$ (or any other fixed tensor topology) defines a bijection based on an arbitrary \emph{conditioner} function $f: \mathbb{R}^d \to \mathbb{R}^d$ and a $\{0,1\}$-mask $m$ as $C(x) = x + (1 - m)\odot f(m\odot x)$. By construction, $m\odot x = m\odot C(x)$ and hence $C^{-1}(y) = y - (1-m)\odot f(m\odot y)$. 

An LU layer computes a general bijective affine transform $A(x)= Mx + b$. Bijectivity is guaranteed by imposing constraints on the parametrization linear transform: $M=LU$, where $L$/$U$ are lower/upper triangular matrices with $L_{ii}=1$ and $U_{ii}\neq 0$ for all indices $i$.    

In order to to allow uniform handling of different input topologies, USFlows employs one-star-convolutions (convolutions with kernel size 1 along all spatial dimensions, adapted to input topology) for the bijective affine transforms, where an LU-decomposed channel transform ensures bijectivity. For the conditioner of the coupling transforms, general adaptive convolutions (convolutions with homogeneous kernel size along all spatial dimensions, adapted to input topology) with layer normalization and gating are employed.

A USFlow model is build from blocks of shape
$B_i = A_i^{-1}\circ C_i \circ A_i$,
where $A_i$ is a bijective affine transform and $C$ is an additive coupling layer. 
The complete flow is of shape
$
{\phi = A_{n+1} \circ B_n \circ B_{n-1} \circ \cdots \circ B_1,}
$
where the final affine transform ensures that the flow can have an arbitrary (constant) Jacobian determinant. Note that each block is volume preserving since determinants of 
$A_i$ and $A_i^{-1}$ cancel each other and additive coupling layers are volume preserving by design~\cite{dinh2014nice}.

\paragraph{NonUSFlows Architecture}
The NonUSFlows architecture is identical except that we use masked affine coupling instead of additive coupling for the blocks. 

A masked affine coupling layer $C: \mathbb{R}^d \to \mathbb{R}^d$ (or any other fixed tensor topology) defines a bijection based on an arbitrary additive conditioner function $f: \mathbb{R}^d \to \mathbb{R}^d$, a multiplicative conditioner function $g: \mathbb{R}^d \to \mathbb{R}^d$ with $g(x)_i\neq 0$ for all $x\in\mathbb{R}^d$, $i\in\{1,\ldots,d\}$ and a $\{0,1\}$-mask $m$ as $C(x) = m\odot x +(1-m)\odot g(m\odot x) \odot x + (1 - m)\odot f(m\odot x)$. Again, since $m\odot x = m\odot C(x)$, the inverse is given by $C^{-1}(y) = m\odot y + (1-m)\odot\frac{y - (1-m)\odot f(m\odot y)}{g(m\odot y)}$. Note that an affine coupling layer with $g(x) = 1$ for all $x$ is equivalent to an additive coupling layer. 

Allowing arbitrary non-zero outputs for $g$ can be numerically unstable in practice (locally exploding determinants). Therefore, we adopt the common practice to apply a final scaled $\tanh$ activation with a configurable scaling parameter to our conditioner network $g$. Moreover, $f$ and $g$ are implemented by a single conditioner that computes both outputs, which further unifies the interface of both models. In all our experiments, the parametrization of the conditioner used in USFlows and NonUSFlows differs only in the final layer to produce the required output shapes, respectively.

\section{Related Work}\label{sec:related_work}

\paragraph{Modern Deep One-class Classification}

Deep one‑class approaches such as Deep SVDD are vulnerable to representational collapse (forcing diverse normals into a single center), and latent misalignment (no incentive to preserve input‑space density/manifold structure).
To make collapse suboptimal at the objective level, follow-up methods inject training signals that enforce feature diversity and margin: DROCC synthesizes near‑manifold “virtual anomalies” and learns to separate them from normals~\cite{GoyalRJS020}; DeepSAD adds a pull–push term using few labeled outliers to tighten the normal region~\cite{RuffVGBMMK20}; and ~\cite{ChongRKB20} regularize against constant embeddings via noise injection and a mini‑batch variance penalty. 
In parallel, a complementary line of work preserves latent fidelity by coupling compactness with reconstruction or probabilistic consistency: DASVDD and DSPSVDD combine Deep SVDD with autoencoder reconstruction to keep features informative and curb collapse~\cite{ZhangD21,HojjatiA24}, while Deep SVDD‑VAE introduces a probabilistic latent that regularizes geometry and improves distributional alignment~\cite{ZhouLZZS21}. In contrast to such extensions and auxiliary tasks, we show how training a USF via MLE is equivalent to optimizing a one-class loss with implicit density regularization and how this avoids representational collapse and benefits latents space alignment.

\paragraph{Normalizing Flows for Anomaly Detection}

Experience with pixel-space flows revealed that maximum-likelihood training steers toward generic low-level regularities. In term, training flows on deep feature embeddings resolves this behavior and markedly improves performance~\cite{kirichenko_why_2020a}. 
DifferNet is the first method that fits a flow on features from a pretrained CNN to demonstrate strong image-level detection on industrial AD~\cite{RudolphWR21}, while CS-Flow extends this line with multi-level/contextualized feature modeling~\cite{RudolphWRW22}. CFlow conditions the flow on spatial indices and backbone features to model the distribution of normal patches per location, enabling accurate localization \cite{GudovskiyIK22}; FastFlow implements a lightweight 2D convolutional flow directly on feature maps, preserving topology end-to-end for real-time anomaly segmentation \cite{YuZWLWTW21}; and U-Flow adopts a U-shaped, multi-scale conditioning design that couples local likelihoods with broader context \cite{TailanianPM24}. 
All of these methods employ variable-volume flows with input-dependent Jacobians. This boosts per-block modeling power, but can confound scoring by entangling base-density and volume terms. Notably, uniformly scaling flows may serve as direct drop-in replacements without having to adjust other methodological components.

\paragraph{Intersection of Normalizing Flows and Deep One-Class Classification}

Normalizing flows and deep one-class classification both define “normality” as a high‑probability (or minimum‑volume) region and flag points outside it. Density thresholding with flows and distance cut-offs with one‑class objectives are thus two lenses to identify the boundary of normal data. 
A growing body of research also proposes hybrid approaches. Flow‑based SVDD instantiates a SVDD objective atop a volume preserving flow, avoiding collapse while learning a tight normal region in latent space~\cite{SenderaSMSPT21}.  One‑Flow similarly uses a flow to find a minimal‑volume hypersphere that contains a target fraction of the data, operationalizing one‑class support estimation~\cite{MaziarkaSSSTS22}, while NFAD trains a flow on normal data and samples low‑density regions to synthesize pseudo‑anomalies that refine one‑class classifiers, turning their optimization into a self-supervised binary task~\cite{RyzhikovBUD21}. 
Despite such hybrids and recurring observations of similarity, the literature still lacks a precise theoretical bridge between the two paradigms.

This motivates our subsequent analysis: when do normalizing flows replicate well-principled one-class approaches, and how can this be leveraged to improve performance and robustness in flow-based anomaly detection?

\section{Comparing Deep SVDDs and Uniformly Scaling Flows}\label{sec:comparison}

This section first establishes a theoretical bridge between Deep SVDD and uniformly scaling flows. It then relates their shared failure modes, delineates native differences in latent space alignment, and shows how dimensionality reduction methods established for Deep SVDD carry over to USFs. 

\subsection{Loss Function}\label{sec:loss_function}

Recall that the training loss of a Deep SVDD given in Equation \ref{eqn:deepsvdd_loss} consist of two terms: the square distance from the center $\| \omega(x,w) - c \|^2$ and a regularization term. 
Comparing the Deep SVDD objective to the maximum likelihood training objective of normalizing flows reveals a close relationship. Specifically, when training a flow \( \phi \) whose Jacobian determinant is independent of
$x$ (i.e., a USF) with base distribution \( \mathcal{N}\left(c, \frac{1}{2}I\right) \) and parameters $w$ on a training dataset $D$, the objective becomes:  
\begin{align}
    \min_w \mathbb{E}_{x\sim X}\left[-\ln \hat{p}_X^w(x)\right] 
    &= \min_w \mathbb{E}_{x\sim D}\left[-\ln \left(\mathcal{N}\left(\phi_w(x); c, \frac{1}{2}I\right) \left|\det J_{\phi_w}(x)\right|\right)\right] \nonumber\\
    &= \min_w \mathbb{E}_{x\sim D}\left[-\ln\exp\left( -\frac{(\phi_w(x) - c)^T 2I (\phi_w(x) - c)}{2}\right) - \ln\psi_{\text{det}}^\phi(w)\right] \nonumber\\
    &= \min_w \mathbb{E}_{x\sim D}\left[ (\phi_w(x) - c)^T (\phi_w(x) - c) - \ln\psi_{\text{det}}^\phi(w)\right] \nonumber\\
    &= \min_w \mathbb{E}_{x\sim D}\left[\|\phi_w(x) - c\|^2\right] - \ln\psi_{\text{det}}^\phi(w)\label{eqn:usflow_deepsvdd_loss}
\end{align}
Note that the constant terms derived from $\ln \mathcal{N}\left(\phi_w(x); c, \frac{1}{2}I\right)$ can be omitted because of the $\min$ operator.
Thus, the objectives differ primarily in the "regularization" term applied only to the weights. We will see in Section \ref{sec:mode_explode} and Section \ref{sec:lsa} that this alternate term is critical to avoid hypersphere collapse and to ensure a meaningful latent space alignment. 

\subsection{Hypersphere Collapse and Exploding Determinants}\label{sec:mode_explode}

Deep SVDD's objective admits a degenerate solution: \emph{hypersphere collapse}, where all inputs map to a single point, minimizing the loss trivially. This is mitigated architecturally (e.g., no biases, fixed center) but limits expressivity.
While follow-up methods attempt to circumvent these limitations (see Section~\ref{sec:related_work}), flow-based models prevent this degeneration theoretically. Namely, architectural constraints ensure bijectivity and maximum likelihood training of a flow minimizes the reverse Kullback-Leibler divergence in latent space. 
We derive this fact well known fact for the sake of self-containedness:
\begin{align*}
    \text{KL}\left(P_{\phi(X)} \bigm\| \mathcal{N}\left(c, \tfrac{1}{2}I\right)\right) 
    &= \mathbb{E}_{x \sim P_X} \left[ \ln \frac{p_X(x) \cdot |\det J_{\phi}(x)|^{-1}}{\mathcal{N}\left(\phi(x); c, \frac{1}{2}I\right)} \right] \\
    &= \underbrace{\mathbb{E}_{x \sim P_X} [\ln p_X(x)]}_{\text{const.}} - \mathbb{E}_{x \sim P_X} \left[ \ln \mathcal{N}\left(\phi(x); c, \frac{1}{2}I\right)  + \ln\psi_{\text{det}}^\phi\right] \\
    &= \EE_{x\sim P_X}[-\ln\hat{p}_w(X=x)] + K, 
\end{align*}
where the last expression is obtained by applying the logarithmic version of the change of variables formula to the second term of the previous expression and $K$ is a constant value.
Hence, training a flow model with maximum likelihood drives the transformed data distribution toward the base distribution $P_B$ (here, $\mathcal{N}(c, \tfrac{1}{2}I)$ to align with Deep SVDD). This holds for any base distribution and any flow.

\paragraph{Implicit Determinant Regularization}  
In practice, normalizing flows are susceptible to a related pathology: \emph{exploding determinants}~\cite{behrmann_understanding_2021,lee_universal_2021,liao_jacobian_2021}. Gradient-based optimization can exploit the change-of-variables formula by inflating $|\det J_{\phi}|$, making the high-density region of the base distribution represent only a vanishingly small portion of the data support. We mitigate this effect with a Bayesian approach for the USFlow architecture, placing a log-normal prior on the absolute value of the determinant to regularize its scale during maximum a posteriori estimation. 
We show that it is possible to define a prior on the weights of an USFlow such that the induced prior on the determinant of the transformation is log-normal.
\begin{definition}
The \emph{bilateral log-normal} is defined on $\mathbb{R}$  as 
\[\textrm{BiLogNormal}(x; \mu, \sigma^2) = \frac{1}{2} \textrm{LogNormal}(|x|; \mu, \sigma^2),\]
where $\textrm{LogNormal}(x; \mu, \sigma^2) = \frac{1}{x\sigma \sqrt{2\pi}}\exp{-\frac{(\ln x - \mu)^2}{2\sigma^2}}$ is the usual log-normal distribution defined on $\mathbb{R}_{\geq 0}$. Note that if $X\sim \textrm{BiLogNormal}(\mu, \sigma^2)$ then $|X|\sim \textrm{LogNormal}(\mu, \sigma^2)$ and $\ln |X| \sim \mathcal{N}(\mu, \sigma^2)$.
\end{definition}
In the following, we carry out the derivation for image topologies, i.e. input tensors of shape $\textrm{Channels}\times\textrm{Height}\times\textrm{Width}$. The derivation for other input topologies is completely analogous.
Now recall that the USFlows Architecture is build from additive coupling and affine transforms. Additive coupling is volume preserving by design, which means that we only need to take care of the affine transforms. The affine transforms of a USFlow are given by bijective $1\times 1$ convolutions with LU decomposed bijective channel transforms. Let $C$ be a $1\times 1$ convolution with channel transform $A = LU$ for lower/upper triangular matrices $L$/$U$ with $L_{ii} = 1$ and $U_{ii} \neq 0$ for all $i$. For a fixed image topology $c\times h \times w$, $|\det J_C(x)| = |\det A|^{hw}$. 
Further, since $|\det A| = \prod_{i=1}^c |U_{ii}|$, we impose an independent bilateral log-normal prior $\textrm{BiLogNormal}(0, \sigma_0^2)$ with $\sigma_0 = \sigma/(chw)$ on each $u_i := U_{ii}$. This yields multiplicative regularization:  
\begin{align}
    p\left(\left|\det J_C(x)\right|\right) &= p\left({\prod}_{i=1}^c |u_{i}|^{hw}\right) \nonumber 
    = p\left(\exp\left(hw{\sum}_{i=1}^c \ln |u_{ii}|\right)\right). \label{eqn:log_normal}
\end{align}
Since $p(\ln |u_i|) = \mathcal{N}(0, \sigma_0^2)$ and all $u_i$ are independent:  
\[
{\textstyle hw\sum_{i=1}^c} \ln |u_{i}| \sim \mathcal{N}(0, \sigma^2) \implies |\det J_{C}(x)| \sim \text{LogNormal}(0, \sigma^2).
\]
Variance control via $\sigma_0$ thus regularizes determinant scale. 

Note that it is sufficient to regularize the affine transforms individually to the desired scale because, except for the last affine transform, all affine transforms appear as adjoint actions $C^{-1}\circ F \circ C$ applied to an additive coupling layer $F$ and therefore determinants cancel block-wise. This prior regularizes the overall transformation as well as the inter-block transforms in a uniform way. Empirically, we did not observe a need to regularize the remaining parameters of the network.


\subsection{Latent Space Alignment}\label{sec:lsa}

While kernel-based one-class methods are consistent density level-set estimators~\cite{RuffGDSVBMK18}, Deep SVDD foregoes direct density modeling. Its objective learns compact representations but lacks a mechanism to enforce alignment between input-space density and latent-space geometry. This permits pathological solutions, such as \emph{density inversion}, where a less likely input is mapped closer to the center $c$ than a more likely one, violating fundamental anomaly detection principles while achieving minimal loss.

\begin{proposition}
Let $\mathcal{N}$ be a $d$-dimensional standard normal distribution with $d > 2$. There exists a class of functions $F_\alpha: \mathbb{R}^d \to \mathbb{R}^d$, $\alpha\in\mathbb{R}^+$, such that $F_\alpha$ is non-degenerate for all $\alpha > 0$ and $\lim_{\alpha \to \infty} L(N, F_\alpha) = 0$, yet for all $\alpha>0$ and $x, y \in \mathbb{R}^d$ with $x,y \neq 0$,
\[
\| F_\alpha(x) - c \| < \| F_\alpha(y) - c \| \implies \mathcal{N}(x) < \mathcal{N}(y),
\]
where $L$ is the Deep SVDD loss (without regularization) with center $c$.
\end{proposition}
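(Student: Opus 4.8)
The plan is to produce an \emph{explicit} radial map that completely reverses the norm ordering, and then check the three requested properties directly. Since the standard normal density $\mathcal{N}(x)=(2\pi)^{-d/2}e^{-\|x\|^2/2}$ is strictly decreasing in $\|x\|$, the condition $\mathcal{N}(x)<\mathcal{N}(y)$ is simply $\|x\|>\|y\|$, so I want $F_\alpha$ to send large-norm (low-density) inputs close to $c$ and small-norm (high-density) inputs far away, while still becoming concentrated near $c$ on average as $\alpha\to\infty$. The natural candidate is the classical sphere inversion through the origin, rescaled by $1/\alpha$ and recentered at $c$:
\[
F_\alpha(x) \;=\; c + \frac{x}{\alpha\,\|x\|^2}, \qquad x\neq 0,
\]
with $F_\alpha(0)$ set arbitrarily (say $F_\alpha(0)=c$), which is harmless since $\{0\}$ is $\mathcal{N}$-null and the ordering claim is stated only for $x,y\neq 0$. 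This maps the origin-centered sphere of radius $r$ onto the $c$-centered sphere of radius $1/(\alpha r)$.

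Then I would verify the three claims in turn. \textbf{Non-degeneracy:} $F_\alpha$ is the composition of the inversion $x\mapsto x/\|x\|^2$ (an involutive diffeomorphism of $\mathbb{R}^d\setminus\{0\}$), a homothety of ratio $1/\alpha>0$, and the translation by $c$; hence $F_\alpha$ restricts to a diffeomorphism of $\mathbb{R}^d\setminus\{0\}$ onto $\mathbb{R}^d\setminus\{c\}$, so it is injective, has a full-dimensional image (the complement of a single point), and nowhere-singular Jacobian — in particular it exhibits no collapse, for every $\alpha>0$. \textbf{Vanishing loss:} since $\|F_\alpha(x)-c\|=1/(\alpha\|x\|)$, we get $L(\mathcal{N},F_\alpha)=\alpha^{-2}\,\mathbb{E}_{x\sim\mathcal{N}}\!\left[\|x\|^{-2}\right]$; using the chi density $f(r)\propto r^{d-1}e^{-r^2/2}$ the relevant integrand is $r^{d-3}e^{-r^2/2}$, integrable near $0$ exactly when $d-3>-1$, i.e. $d>2$, and a short chi-squared computation gives the closed form $\mathbb{E}[\|x\|^{-2}]=1/(d-2)$, so $L(\mathcal{N},F_\alpha)=\tfrac{1}{(d-2)\alpha^2}\to 0$. \textbf{Ordering:} for $x,y\neq 0$ one has $\|F_\alpha(x)-c\|<\|F_\alpha(y)-c\|\iff \|x\|>\|y\|\iff \mathcal{N}(x)<\mathcal{N}(y)$, which gives the stated implication (indeed an equivalence, uniformly in $\alpha$).

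The main obstacle — and the sole place where the hypothesis $d>2$ is actually needed — is the finiteness of $\mathbb{E}_{x\sim\mathcal{N}}[\|x\|^{-2}]$: for $d\le 2$ the inversion map has infinite Deep SVDD loss, and one would instead have to use a bounded strictly decreasing radial profile $g_\alpha(r)$ with $g_\alpha(0)\to 0$, which is messier; for $d>2$ the inversion is the cleanest witness. The rest is bookkeeping: confirming that inversion is a diffeomorphism away from the origin, and observing that the single excluded point $x=0$ is irrelevant both to the loss integral and to the pointwise ordering statement. I would close by stressing that $F_\alpha$ drives the \emph{unregularized} Deep SVDD loss arbitrarily close to $0$ while every sublevel set of its anomaly score $\|F_\alpha(\cdot)-c\|$ is the exact reverse of the corresponding density superlevel set — the density-inversion pathology the proposition isolates, and precisely the behaviour the determinant term $-\ln\psi_{\text{det}}^\phi(w)$ in the USF objective rules out.
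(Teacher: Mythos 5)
Your proposal is correct and follows essentially the same route as the paper: an inversion-type radial map with $\|F_\alpha(x)-c\|=1/(\alpha\|x\|)$, the loss evaluated via $\mathbb{E}_{x\sim\mathcal{N}}[\|x\|^{-2}]=1/(d-2)$ for $d>2$ (chi-squared), and the norm-reversal giving the density-inversion ordering. Your explicit vector-valued formulation $F_\alpha(x)=c+\tfrac{x}{\alpha\|x\|^2}$ and the diffeomorphism check merely make rigorous what the paper writes informally, so the two arguments coincide in substance.
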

\begin{proof}
For simplicity, we assume $c = 0$ (otherwise, $F_\alpha$ would also need to shift points towards $c$). We define
\[
F_\alpha(x) = \frac{1}{\alpha|x|},
\]
with \( F_\alpha(0) = 0 \). Consequently, as $|x|$ increases (i.e., as $x$ becomes less probable under $\mathcal{N}$), the center distance $\|F_\alpha(x) - c \| = \| F_\alpha(x) \|$ decreases and vice versa.

The overall loss for a function of this class is
\[
L(\mathcal{N}, F_\alpha) = E_{x \sim \mathcal{N}}\left[ \|F_\alpha(x) - c\|^2 \right] = E_{x \sim \mathcal{N}}\left[ \left( \frac{1}{\alpha|x|} \right)^2 \right] = \frac{1}{\alpha^2} E_{x \sim \mathcal{N}}\left[ \frac{1}{|x|^2} \right].
\]

Since $x$ is a vector in $\mathbb{R}^d$ where each component $x_i$ is an i.i.d standard normal random variable, i.e. $x_i \sim \mathcal{N}(0,1)$, it follows that $|x|^2$ is distributed according to a chi-squared distribution with $d$ degrees of freedom, i.e. $|x|^2 \sim \chi^2(d)$. Therefore, for $d > 2$, the expected value simplifies to $ E_{x \sim N}\left[ \frac{1}{|x|^2} \right] = \frac{1}{d-2} $.
Substituting into the loss gives $L(\mathcal{N}, F_\alpha) = \frac{1}{\alpha^2 (d-2)}$, and thus $\lim_{\alpha \to \infty} L(N, F_\alpha) = 0 $. 
\end{proof}

We choose to ignore the regularization term in our example because we assume such degenerate solutions to be approximated by complex neural networks and the influence of the regularization term w.r.t. the parameterization given in the example can be misleading. For the sake of completeness, however, we would still like to mention that it is not hard to adjust the example such that we can meet arbitrarily small losses with the optimal solution under the Deep SVDD loss function, even if the standard regularization is applied to the given parameters.

\paragraph{Uniformly Scaling Flows}
USFs, in contrast, guarantee density-preserving alignment. From Eq. \ref{eqn:var_change}, a constant Jacobian determinant implies $p_X(x) < p_X(y) \Leftrightarrow p_B(\phi(x)) < p_B(\phi(y))$. For an isotropic Gaussian base distribution, this yields the desired property: $p_X(x) < p_X(y) \Leftrightarrow \|\phi(x)\| > \|\phi(y)\|$. This ensures the latent norm is a faithful anomaly score, a property not guaranteed by non-uniformly scaling flows.

\paragraph{Illustrative Example} We demonstrate this practical advantage on an asymmetrical Gaussian mixture toy dataset (designed to incentivize non-uniform volume transfer), comparing the relationship between data density and latent norms for Deep SVDD, USFs, and non-USFs across dimensions $d=2,8,32,128$. 
The parameters of the asymmetric Gaussian mixture distribution are given as follows:
\begin{itemize}
    \item $\mu_1 = \mathbf{1}_d, \mu_2 = -\mathbf{1}_d$
    \item $\Sigma_1 = I, \Sigma_2 = \textrm{diag}(\theta_d)$,
\end{itemize}
where $\mathbf{1}_d$ denotes the all ones vector in $d$ dimensions and $\theta_d\in\mathbb{R}^d$ is chosen such that half of the entries are $5.0$ and half are $1/2$. This asymmetric choice of covariance matrices is deliberate in order to incentivize inhomogeneous volume transfer via $\det\Sigma_1=1$ and $\det\Sigma_2=2.5^{\frac{d}{2}}$. 
For each model architecture and dimension, we conduct a hyperparameter optimization with 10 runs. For the flow architectures, we use (Non)USFlows with 2-10 coupling blocks and 2-3 layers per conditioner. For Deep SVDD, we use MLP encoders with depth 2–6 following a decreasing  schedule based on the input dimension $d$ (i.e., [$d$, $d$], [$d$, $2d$, $d$], ... ,[$d$, $16d$, $8d$, $4d$, $2d$, $d$]), forgoing bias terms and bounded activations as discussed in the preliminaries. We select the best-performing model per setting by validation loss (one-class for DeepSVDD, NLL for flows). Additionally, we optimize non architecture specific parameters such as learning rate and batch size. 

Figure~\ref{fig:2d_gauss_mapping} provides visual intuition for deviations in density faithfulness of the latent mappings for the best-performing models in the two-dimensional setting. The USF exhibits near-ideal density alignment, the non-USF shows noticeable deviations, and Deep SVDD, lacking any incentive for density alignment, produces a highly inhomogeneous latent structure. A visualization of the underlying differences in volume transfer between the USFlows model and the NonUSFlows model is given in Figure~\ref{fig:2d_gauss_mapping_det}. Because the USF’s transformation has a constant Jacobian determinant by design, it induces uniform scaling, as expected. In contrast, for the non-USF, the discrepancies in density alignment are corrected by the variable volume change of the transformation.

To analyze latent-space alignment as dimensionality increases, Figure~\ref{fig:gauss-mapping-all-norm} plots true likelihoods against latent norms for Deep SVDD, NonUSFlow and USFlow. Only the USF consistently maintains a monotonic relationship between data likelihood and latent norm up to 128 dimensions, as reflected by Spearman’s $\rho$ and Kendall’s $\tau$. At 128 dimensions, we also observed numerical instabilities for NonUSFlows; see Section~\ref{sup:nonusf_instabilities} for details. Despite these clear differences in latent alignment between the flow-based models, their estimated likelihoods remain broadly comparable, as shown in Figure~\ref{fig:gauss-mapping-all-logprob}, which plots estimated against true data likelihoods.

\begin{figure}[htbp!]
  \centering
  \captionsetup[subfigure]{labelformat=parens,justification=centering}
  \begin{subfigure}[t]{0.275\textwidth}
    \includegraphics[width=\textwidth]{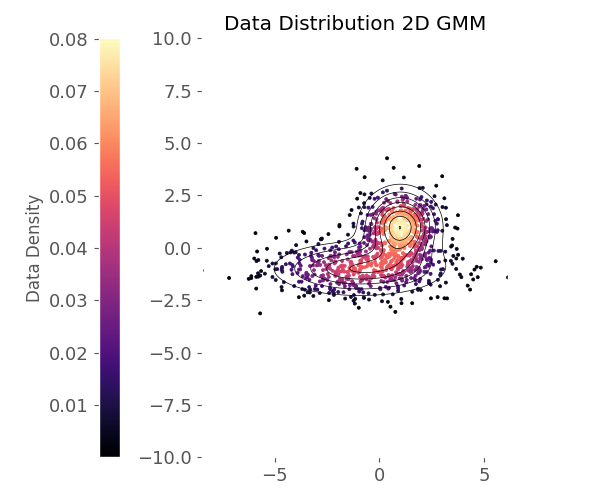}
    \caption{} \label{fig:2d_gmm_contour}
  \end{subfigure}
    \begin{subfigure}[t]{0.235\textwidth}
    \includegraphics[width=\textwidth]{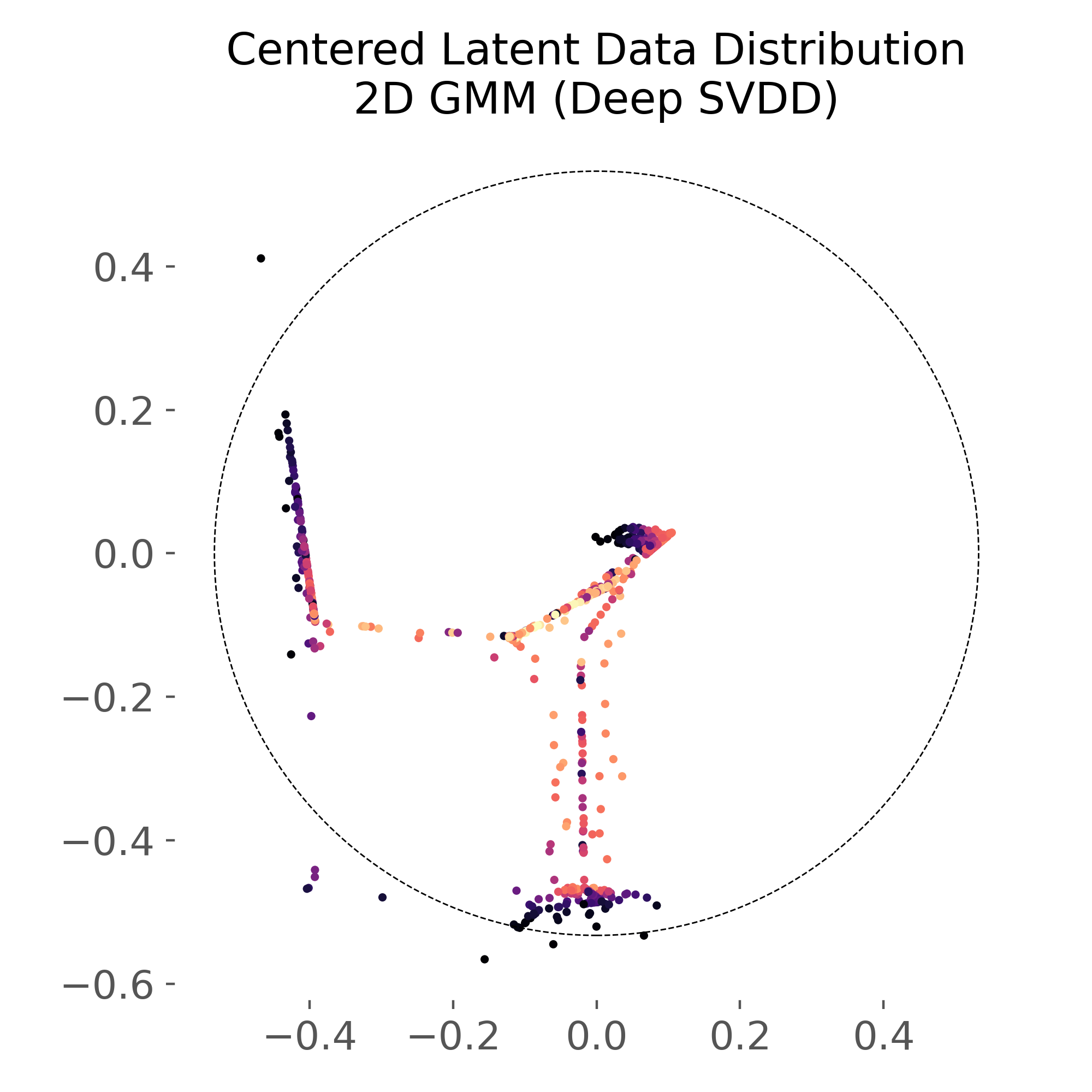}
    \caption{} \label{fig:2d_gmm_latents_deep_svdd}
  \end{subfigure}
  \begin{subfigure}[t]{0.235\textwidth}
    \includegraphics[width=\textwidth]{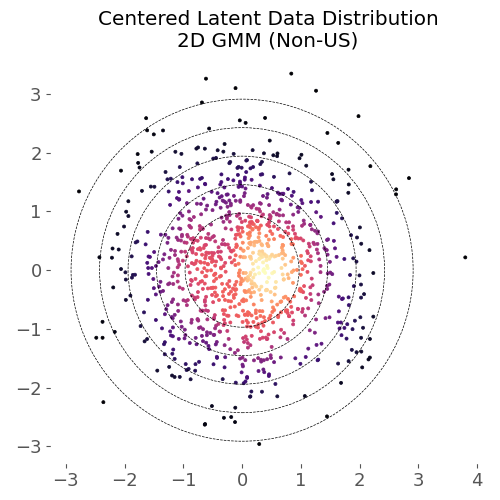}
    \caption{} \label{fig:2d_gmm_latents_nonus}
  \end{subfigure}
  \begin{subfigure}[t]{0.235\textwidth}
    \includegraphics[width=\textwidth]{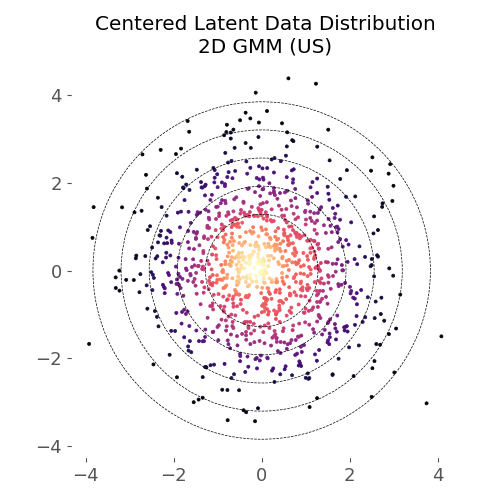}
    \caption{} \label{fig:2d_gmm_latents_us}
  \end{subfigure}\hfill

  \caption{%
  2D Gaussian mixture experiment visualization. (a): True data distribution (b): Latent space of Deep SVDD (c): Latent space of non-USF (d): Latent space of USF. Point color encodes the true data density. Dashed lines show countours of the base distribution ($\sigma$--$3\sigma$) (flows) or the decision threshold (Deep SVDD) based on the 99th distance percentile. The USF shows direct density alignment between data and latent spaces, while Deep SVDD and non-USF show noticable discrepancies.
  }
  \label{fig:2d_gauss_mapping}
\end{figure}

\begin{figure*}[htbp!]
  \centering
  \captionsetup[subfigure]{labelformat=parens,justification=centering}

  \begin{subfigure}[t]{0.27\textwidth}
    \includegraphics[width=\linewidth]{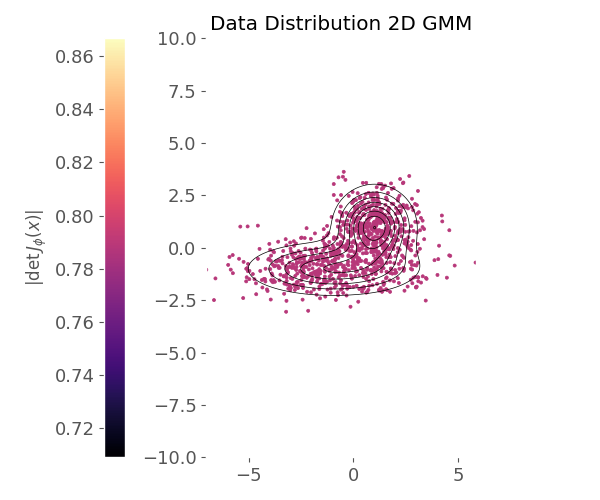}
    \caption{}
    \label{fig:2d_gmm_data_us}
  \end{subfigure}\hfill
  \begin{subfigure}[t]{0.22\textwidth}
    \includegraphics[width=\linewidth]{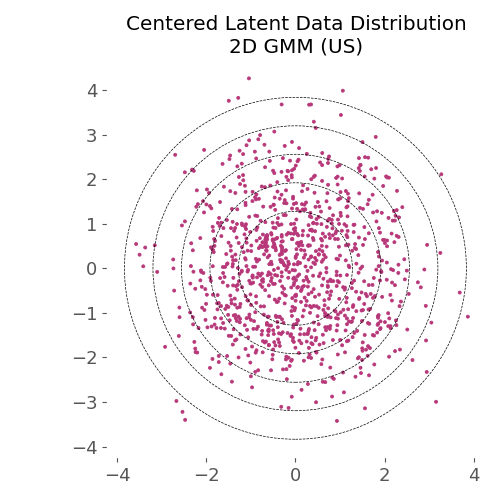}
    \caption{}
    \label{fig:2d_gmm_latent_us}
  \end{subfigure}\hfill\vrule
  \begin{subfigure}[t]{0.27\textwidth}
    \includegraphics[width=\linewidth]{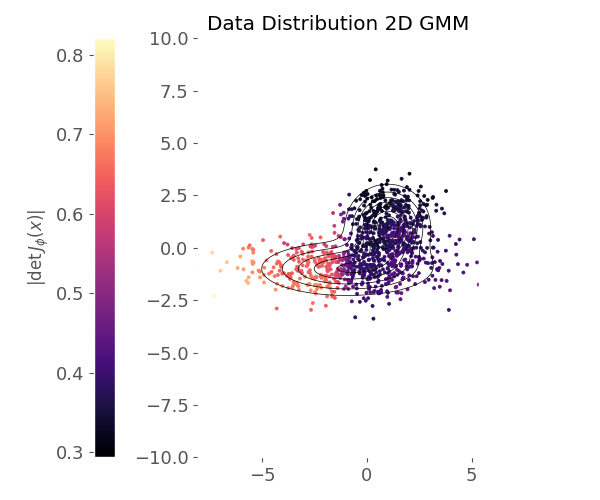}
    \caption{}
    \label{fig:2d_gmm_data_nonus}
  \end{subfigure}\hfill
  \begin{subfigure}[t]{0.22\textwidth}
    \includegraphics[width=\linewidth]{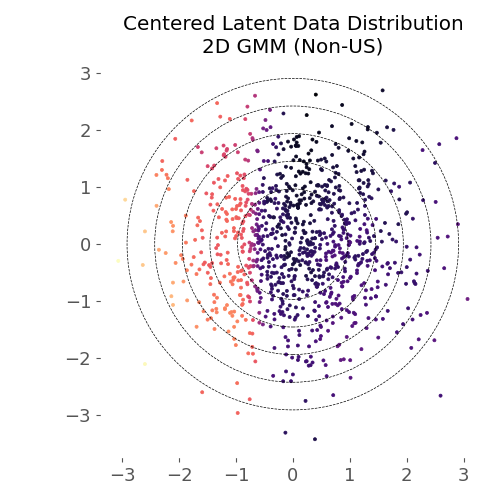}
    \caption{}
    \label{fig:2d_gmm_latent_nonus}
  \end{subfigure}

  \caption{%
  Visualization of the transformation’s volume change in different areas of the data distribution for USFs and non-USFs.
  (a,c): Sample and contours of the true data distribution. The determinant of the respective transformation Jacobian is color coded.
  (b,d): Sample and ideal contours of centered data latents of the (b) USF and the (d) non-USF.
  The dashed contours show the contour lines of the flows base distribution (centered, $\sigma$ -- $3\sigma$). The color of each sample encodes the absolute value of the determinant of the transformation Jacobian.
  The USF enforces a constant Jacobian determinant, ensuring uniform scaling, whereas the non-USF adapts local densities through variable volume changes to correct alignment discrepancies.
  }
  \label{fig:2d_gauss_mapping_det}
\end{figure*}

\begin{figure}[htbp!]
  \centering
  \captionsetup[subfigure]{labelformat=parens,justification=centering}
    \begin{subfigure}[t]{0.24\textwidth}
    \includegraphics[width=\textwidth]{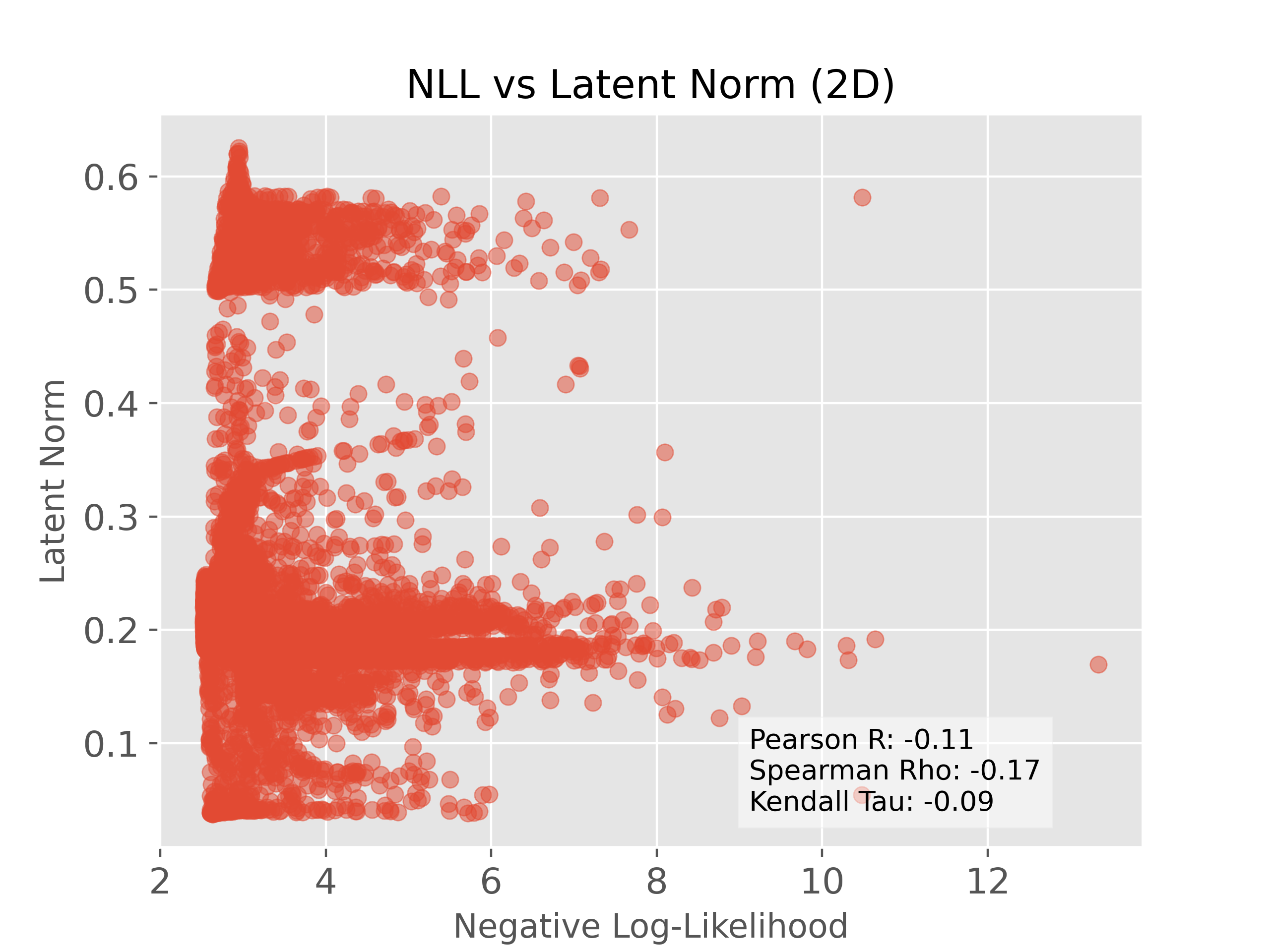}
    \caption{} \label{fig:gmm_nll_vs_norm_2d_deepsvdd}
  \end{subfigure}\hfill
  \begin{subfigure}[t]{0.24\textwidth}
    \includegraphics[width=\textwidth]{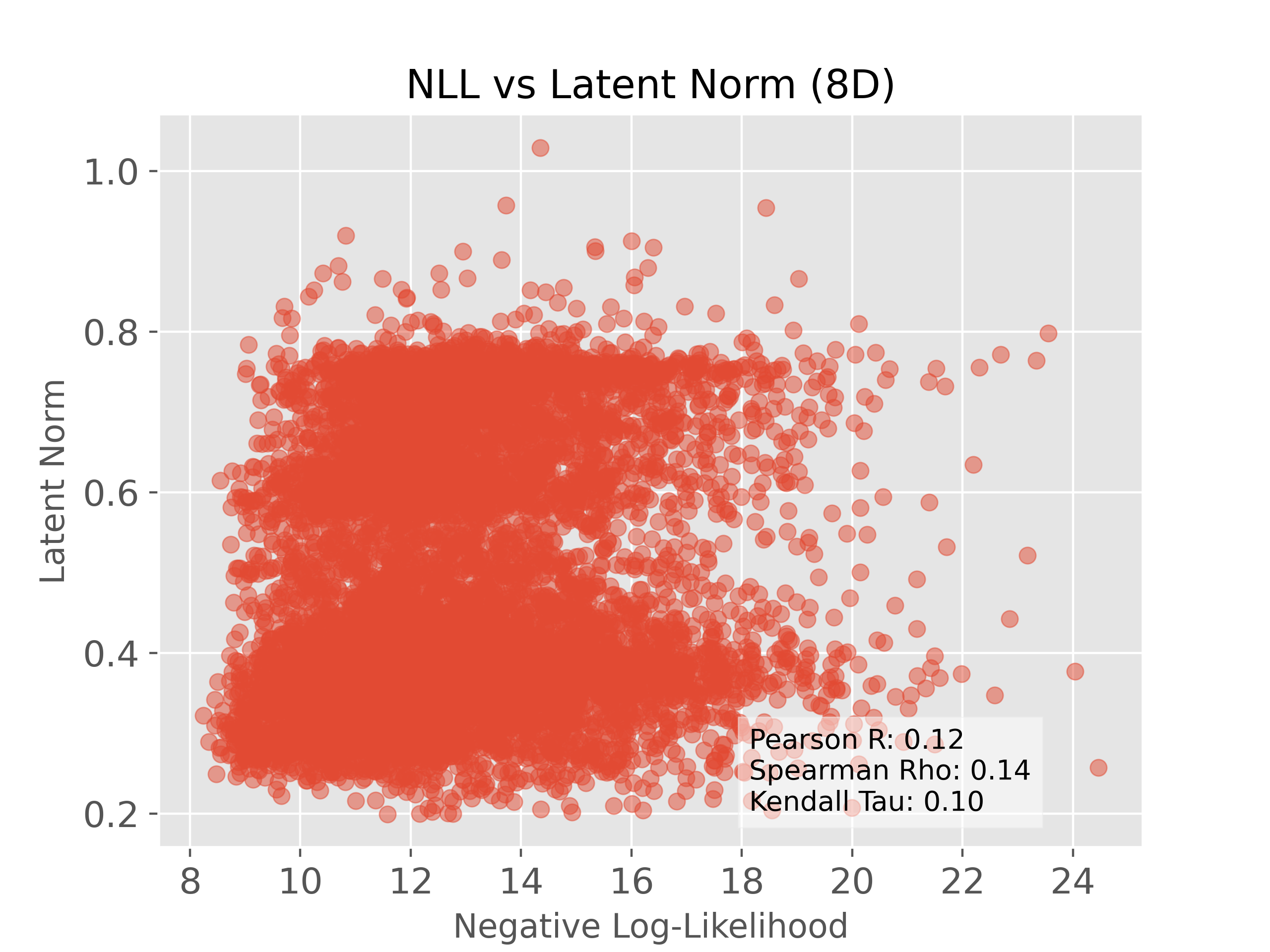}
    \caption{} \label{fig:gmm_nll_vs_norm_8d_deepsvdd}
  \end{subfigure}
  \begin{subfigure}[t]{0.24\textwidth}
    \includegraphics[width=\textwidth]{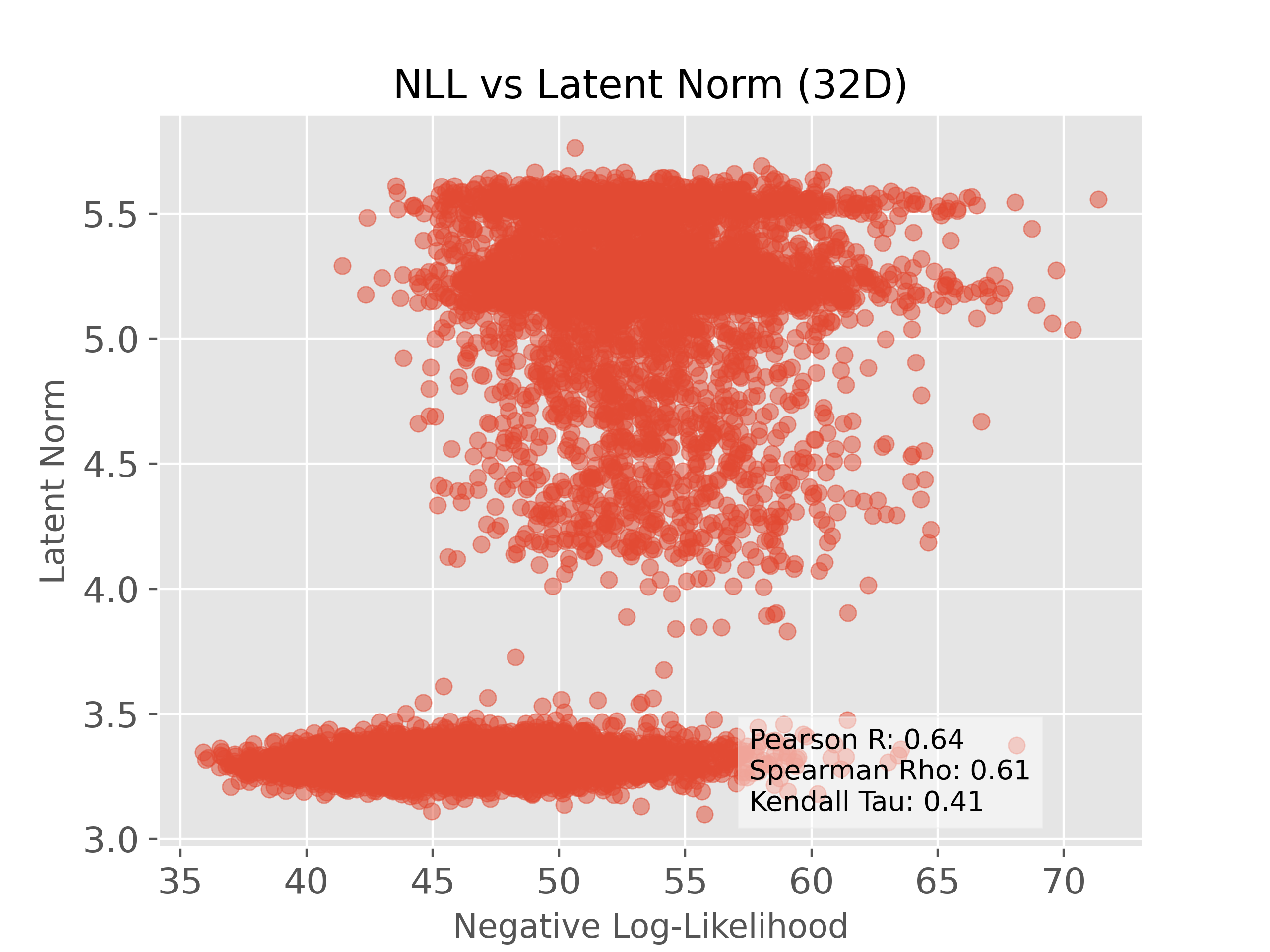}
    \caption{} \label{fig:gmm_nll_vs_norm_32d_deepsvdd}
  \end{subfigure}\hfill
  \begin{subfigure}[t]{0.24\textwidth}
    \includegraphics[width=\textwidth]{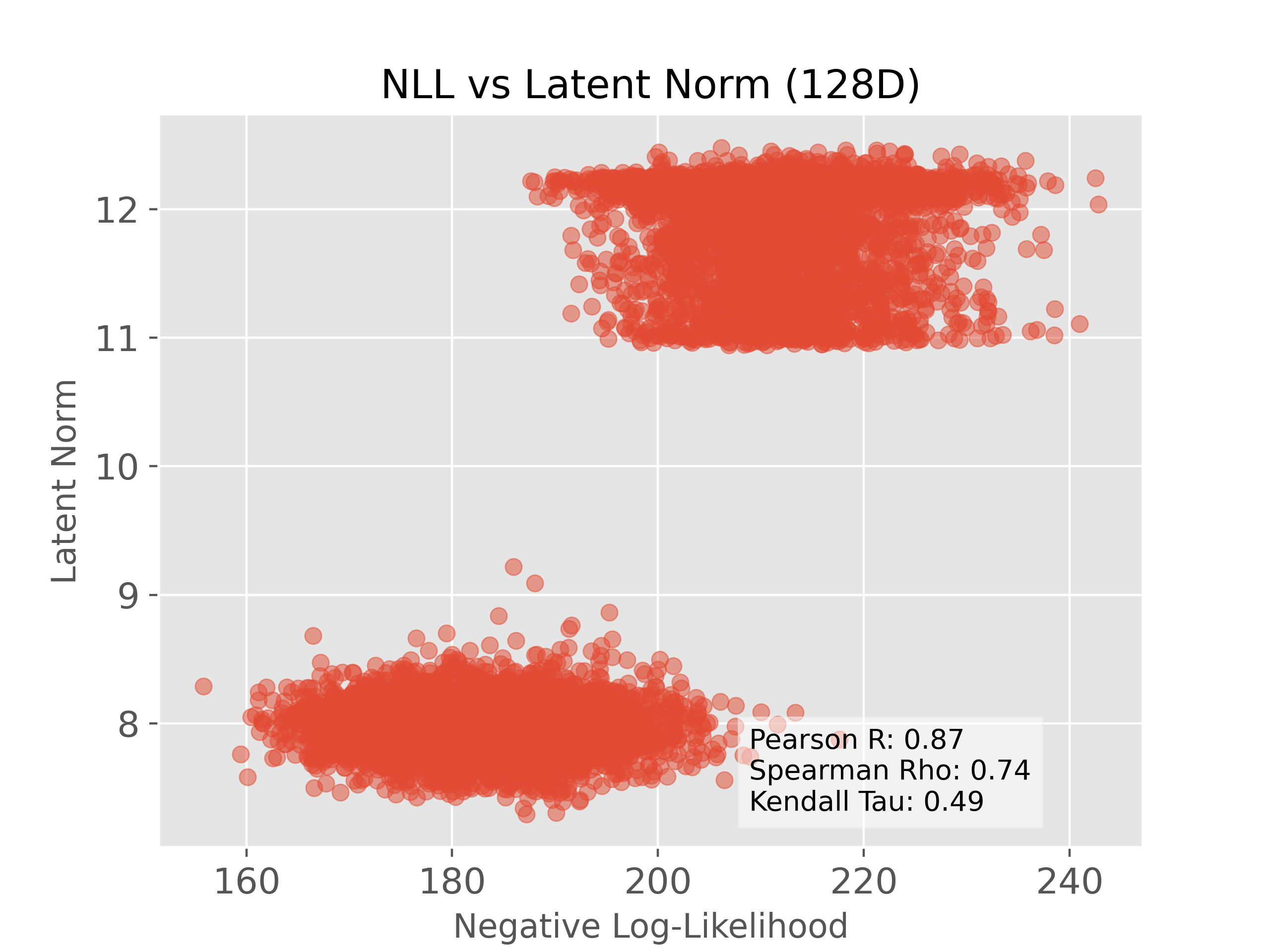}
    \caption{} \label{fig:gmm_nll_vs_norm_128d_deepsvdd}
  \end{subfigure}

    \begin{subfigure}[t]{0.24\textwidth}
    \includegraphics[width=\textwidth]{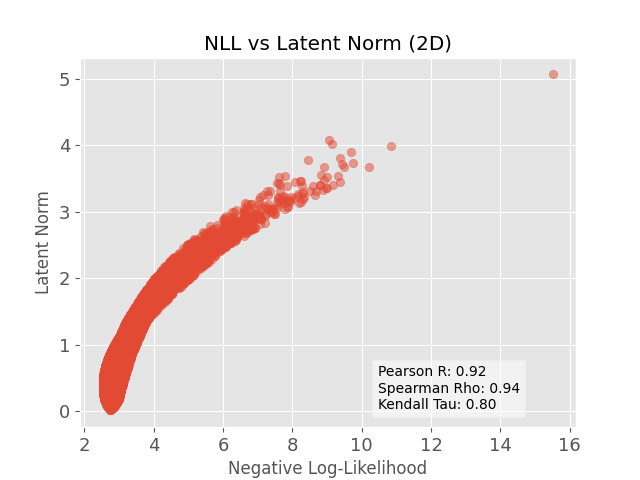}
    \caption{} \label{fig:gmm_nll_vs_norm_2d_nonus}
  \end{subfigure}\hfill
  \begin{subfigure}[t]{0.24\textwidth}
    \includegraphics[width=\textwidth]{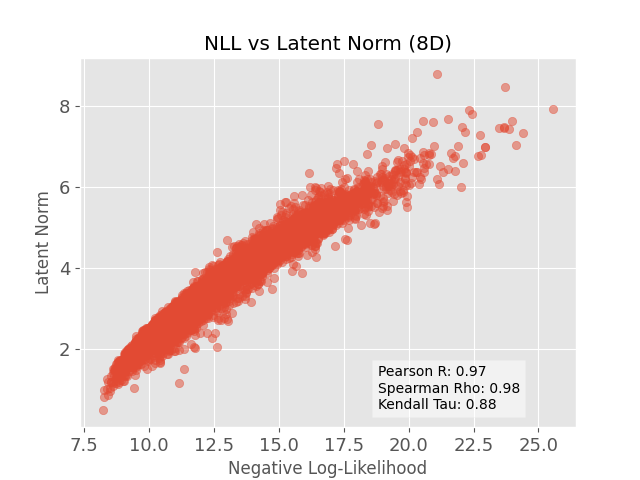}
    \caption{} \label{fig:gmm_nll_vs_norm_8d_nonus}
  \end{subfigure}
  \begin{subfigure}[t]{0.24\textwidth}
    \includegraphics[width=\textwidth]{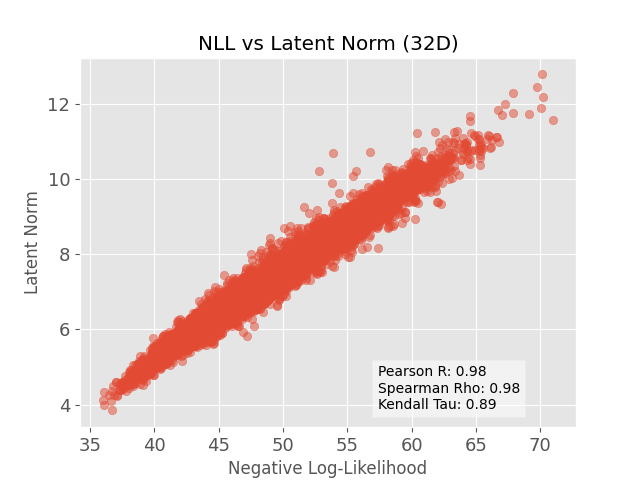}
    \caption{} \label{fig:gmm_nll_vs_norm_32d_nonus}
  \end{subfigure}\hfill
  \begin{subfigure}[t]{0.24\textwidth}
    \includegraphics[width=\textwidth]{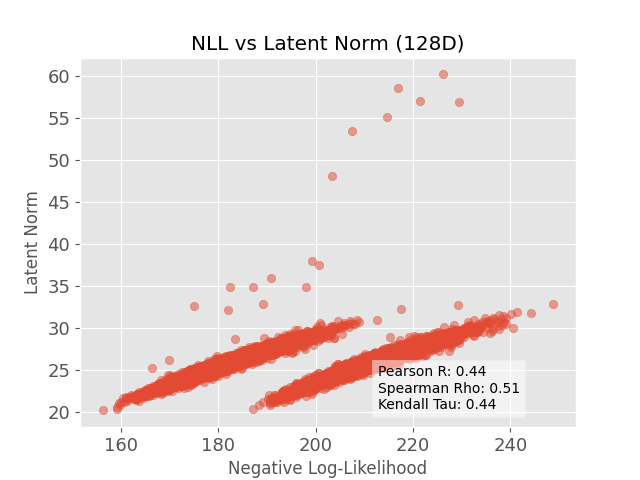}
    \caption{} \label{fig:gmm_nll_vs_norm_128d_nonus}
  \end{subfigure}
  
  \begin{subfigure}[t]{0.24\textwidth}
    \includegraphics[width=\textwidth]{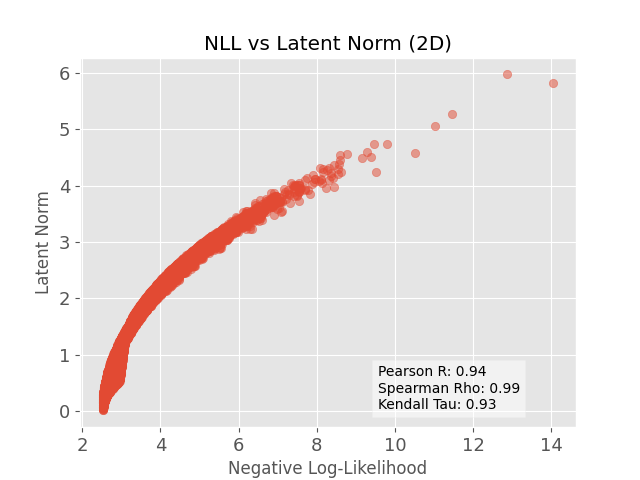}
    \caption{} \label{fig:gmm_nll_vs_norm_2d}
  \end{subfigure}\hfill
  \begin{subfigure}[t]{0.24\textwidth}
    \includegraphics[width=\textwidth]{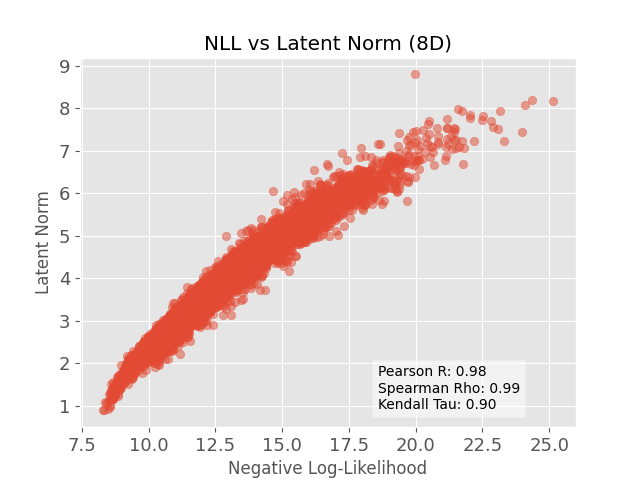}
    \caption{} \label{fig:gmm_nll_vs_norm_8d}
  \end{subfigure}
  \begin{subfigure}[t]{0.24\textwidth}
    \includegraphics[width=\textwidth]{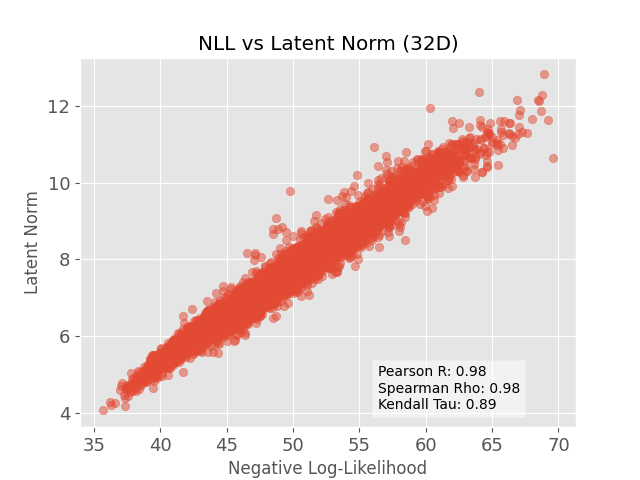}
    \caption{} \label{fig:gmm_nll_vs_norm_32d}
  \end{subfigure}\hfill
  \begin{subfigure}[t]{0.24\textwidth}
    \includegraphics[width=\textwidth]{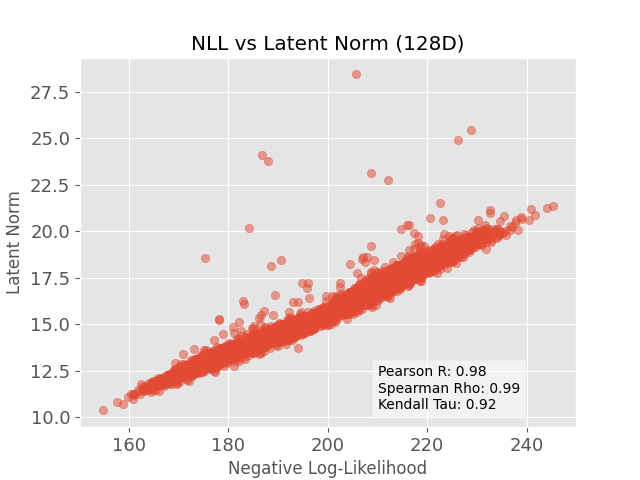}
    \caption{} \label{fig:gmm_nll_vs_norm_128d}
  \end{subfigure}

  \caption{%
    Scatter plots of the true log-likelihoods against the latent norms for the GM experiments (2, 8, 32, 128 dimensions).
    (a) -- (d): Deep SVDD
    (e) -- (h): Non-USF
    (i) -- (l) USF.
    USFs maintain a monotonic relationship between latent norm and data density. Non-USFs follow this behavior at first, but the relationship collapses at 128 dimensions.
    Deep SVDD, by contrast, transitions from a seemingly random latent structure to distinct homogeneous clusters as dimensionality increases.
  }
  \label{fig:gauss-mapping-all-norm}
\end{figure}

\begin{figure}[htbp!]
  \centering
  \captionsetup[subfigure]{labelformat=parens,justification=centering}
  \begin{subfigure}[t]{0.24\textwidth}
    \includegraphics[width=\textwidth]{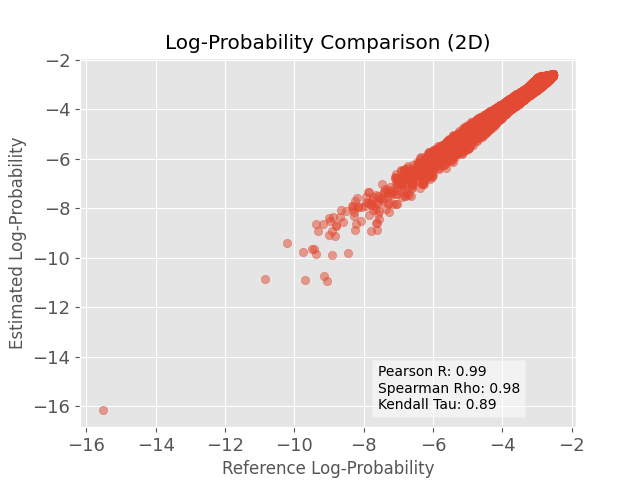}
    \caption{} \label{fig:gmm_nll_vs_norm_2d_nonus}
  \end{subfigure}\hfill
  \begin{subfigure}[t]{0.24\textwidth}
    \includegraphics[width=\textwidth]{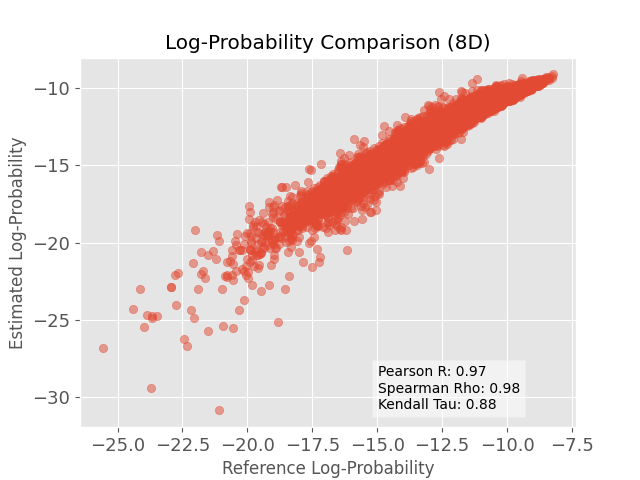}
    \caption{} \label{fig:gmm_nll_vs_norm_8d_nonus}
  \end{subfigure}
  \begin{subfigure}[t]{0.24\textwidth}
    \includegraphics[width=\textwidth]{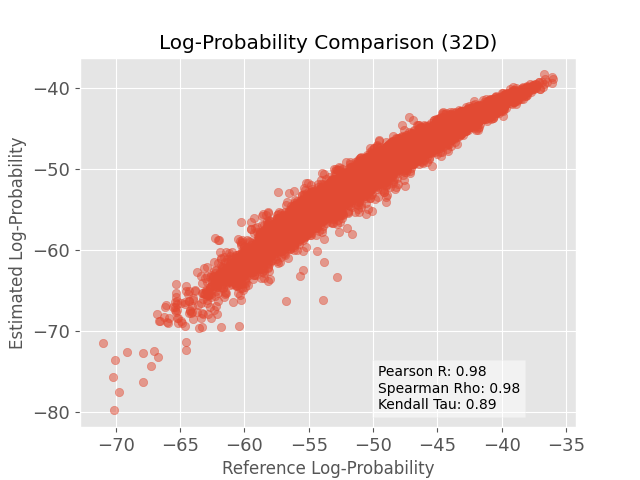}
    \caption{} \label{fig:gmm_nll_vs_norm_32d_nonus}
  \end{subfigure}\hfill
  \begin{subfigure}[t]{0.24\textwidth}
    \includegraphics[width=\textwidth]{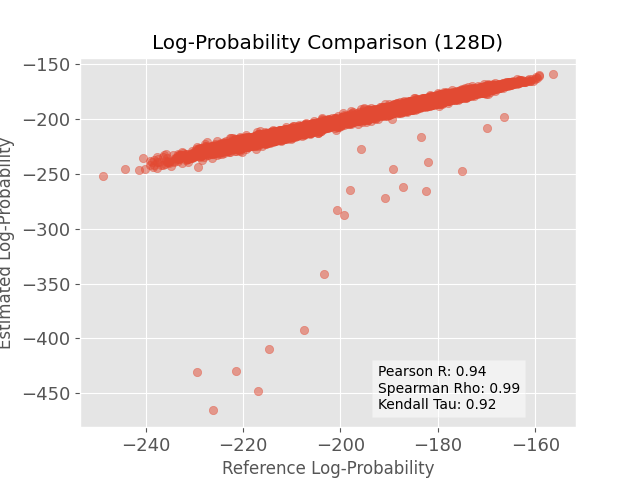}
    \caption{} \label{fig:gmm_nll_vs_norm_128d_nonus}
  \end{subfigure}

    \begin{subfigure}[t]{0.24\textwidth}
    \includegraphics[width=\textwidth]{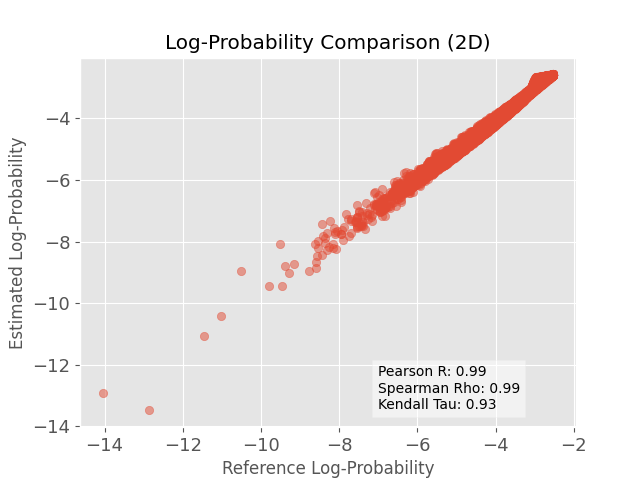}
    \caption{} \label{fig:gmm_nll_vs_norm_2d}
  \end{subfigure}\hfill
  \begin{subfigure}[t]{0.24\textwidth}
    \includegraphics[width=\textwidth]{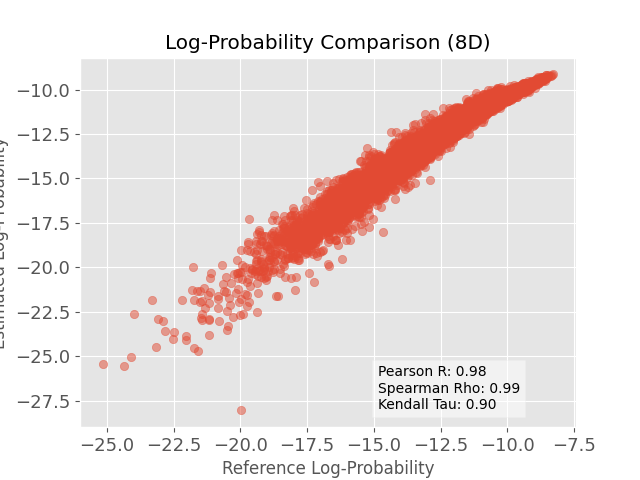}
    \caption{} \label{fig:gmm_nll_vs_norm_8d}
  \end{subfigure}
  \begin{subfigure}[t]{0.24\textwidth}
    \includegraphics[width=\textwidth]{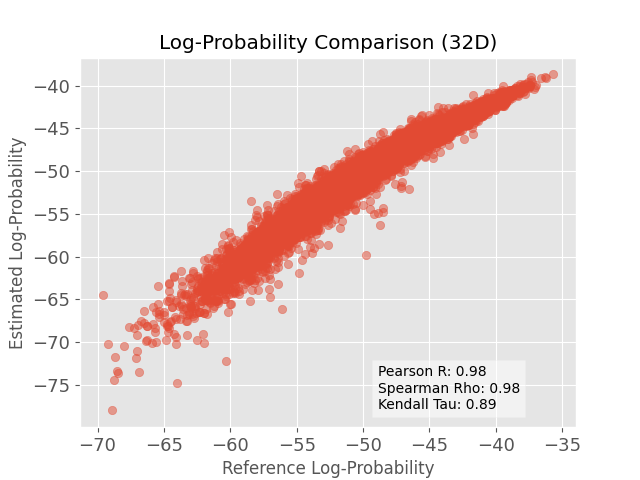}
    \caption{} \label{fig:gmm_nll_vs_norm_32d}
  \end{subfigure}\hfill
  \begin{subfigure}[t]{0.24\textwidth}
    \includegraphics[width=\textwidth]{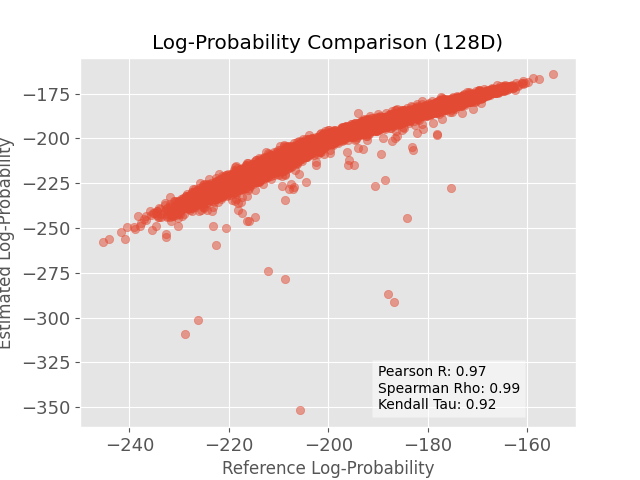}
    \caption{} \label{fig:gmm_nll_vs_norm_128d}
  \end{subfigure}

  \caption{%
    Scatter plots of the true log-likelihoods against the estimated log-likelihoods for the GM experiments (2, 8, 32, 128 dimensions).
    (a) -- (d): Non-USF
    (e) -- (h): USF.
    Across increasing dimensionality, the estimated likelihood quality remains largely comparable between both flow models.
  }
  \label{fig:gauss-mapping-all-logprob}
\end{figure}

\subsection{Dimensionality Reduction}\label{subsec:dim_reduction}

A key distinction between Deep SVDD and uniformly scaling flows (USFs) is that the diffeomorphism property of a USF requires its latent space to have the same dimensionality as the input data. However, a USF with a normal base distribution implicitly defines an effective reduction to a one-dimensional score: the latent norm $\phi'_w(x) := \|\phi_w(x) - c\|$ produces an anomaly ranking identical to that of the full likelihood, effectively acting as a Deep SVDD objective with a specialized regularizer (cf. Eq. \ref{eqn:usflow_deepsvdd_loss}).

To enable explicit dimensionality reduction, we propose a hybrid architecture, VAE-USFlow, which integrates a USF as a learnable prior within a variational autoencoder (VAE) framework. This combines the reconstruction-based objective of a VAE with the density-estimation power of a USF, drawing inspiration from joint training techniques in Deep SVDD~\cite{HojjatiA24} and prior learning in VAEs~\cite{Tomczak2018}. This approach encourages a meaningful compressed representation. We note that the beneficial properties of the USF apply only to this encoder-produced latent space. To mitigate the known issue of autoencoders potentially mapping out-of-distribution data into high-density regions of the prior~\cite{ramakrishna2022efficient}, we combine the latent likelihood with a reconstruction score. To the best of our knowledge, this constitutes a novel application of training a flow-based prior within the VAE paradigm for anomaly detection.

\paragraph{VAE-USFlow} Before we outline the architecture, we briefly review the original VAE formulation: Consider a generative model with latent variables $\hat{p}_X(x) = \int \hat{p}_{X|Z}(x\mid z)p_Z(z)  dz$ together with an amortized posterior approximation $\hat{p}_{Z|X}(z\mid x)$. The system is optimized by maximizing the expected evidence lower bound (ELBO) over the data distribution:
\begin{align}
    \textrm{ELBO}(x; \hat{p}_{Z|X}\| \hat{p}_{Z|X}) &= \ln \hat{p}_X(x) - \mathrm{KL}(\hat{p}_{Z|X}(z\mid x) \| \hat{p}_{Z|X}(z\mid x)) \nonumber \\
    &= \EE_{z\sim \hat{p}_{Z|X}(z\mid x)}\left[\ln \hat{p}_{X|Z}(x \mid z) - \ln \hat{p}_{Z|X}(z\mid x) + \ln p_Z(z)\right] \label{eqn:elbo} \\
    &= \EE_{z\sim \hat{p}_{Z|X}(z\mid x)}\left[\ln \hat{p}_{X|Z}(x \mid z)\right] - \mathrm{KL}(\hat{p}_{Z|X}(z\mid x) \| p_Z(z)). \label{eqn:vamprior}
\end{align}

Typically, the latent prior $p_Z(z)$ is fixed as a simple distribution, e.g. $\mathcal{N}(0, I)$. From Equation \ref{eqn:vamprior} we see that when $p_Z(z)$ is learnable, maximizing the expected ELBO adapts the prior to the expected variational posterior, which is the true prior under the joint $\hat{p}_{X,Z}(x,z) = p_X(x)\hat{p}_{Z\mid X}(z|x)$ \cite{Tomczak2018}. 

We are going to decompose our task of aligning the data in a compressed latent space into two tasks. The encoder (amortized variational posterior) encodes the instances in the compressed space, and a prior given by a trainable USF further aligns the densities with the norms in a secondary latent space. The ELBO objective avoids collapse in the primary space while the flow prior grants more flexibility to the feature encoder.

Our joint model $\hat{p}^{\zeta,\eta}_{X,Z}(x, z) = \hat{p}^\zeta_{X\mid Z}(x \mid z) p^\eta_Z(z)$ uses the following generative process:
\begin{align*}
    B_{\text{prior}} &\sim \mathcal{N}(0, I), \\ 
    Z &\sim \phi^{-1}_\eta(B_{\text{prior}}), \\
    X\mid Z &\sim \mathcal{N}(\mathrm{dec}_\zeta(z), \sigma_{\min}^2 I),
\end{align*}
with $\sigma_{\min} > 0$ sufficiently small. The variational posterior approximation $\hat{p}^\eta_{Z\mid X}(z\mid x)$ is defined as:
\begin{align*}
  Z \mid X &\sim \mathcal{N}(\mathrm{enc}^\mu_\theta(x), \mathrm{enc}^\Sigma_\theta(x)) 
\end{align*}

Using Equation~\ref{eqn:elbo}, we optimize the model end-to-end via:
\begin{align*}
\max_{\zeta, \eta, \theta} \bb{E}_{x\sim \mathcal{D}} \bb{E}_{z\sim q_\theta(z\mid x)} \Biggl[ 
  &\ln \mathcal{N}\left(x; \mathrm{dec}_\zeta(z), \sigma_{\min}^2 I\right) \\
  &- \ln \mathcal{N}\left(z; \mathrm{enc}^\mu_\theta(x), \mathrm{enc}^\Sigma_\theta(x)\right) \\
  &+ \ln \mathcal{N}\left(\phi_\eta(z); 0, I\right) + \ln\left| \det \tfrac{\partial \phi_\eta}{\partial z} (z) \right| \Biggr].
\end{align*}

\paragraph{Proof of Concept} We provide an implementation of the proposed architecture, which we call VAE-USFlow. The model is based on wide-ResNet50 encoder (initialized with a pretrained weights) with a final fully connected layer to project to the desired latent dimension and a simple decoder, which is based on transposed convolutions. The number of layers of the decoder is based on the compression ratio of latent space. As flow prior, we employ the USFlows architecture and use the NonUSFlows architecture for baseline comparison (VAE-NonUSFlow, with softplus as affine activation and clamping at 2.0, analogous to Anomalib defaults). We train both variants on the MVTec datasets. Figure \ref{fig:vae_results} shows the mean image AUC-ROC performance and standard deviation over three runs for the individual classes.

As a proof of concept, we evaluate VAE–USFlow alongside a version with a NonUSFlow prior on MVTec AD. 
An evaluation is given in Figure \ref{fig:vae_results}. While these hybrids do not yet match specialized flow-based anomaly detectors, the USF version achieves a substantially higher average performance and run-to-run performance stability relative to its affine counterpart. Building on these findings, our main empirical study focuses on drop-in USF substitutions within state-of-the-art anomaly detection architectures.

\begin{figure}[htp!]
  \centering
  \includegraphics[width=\textwidth]{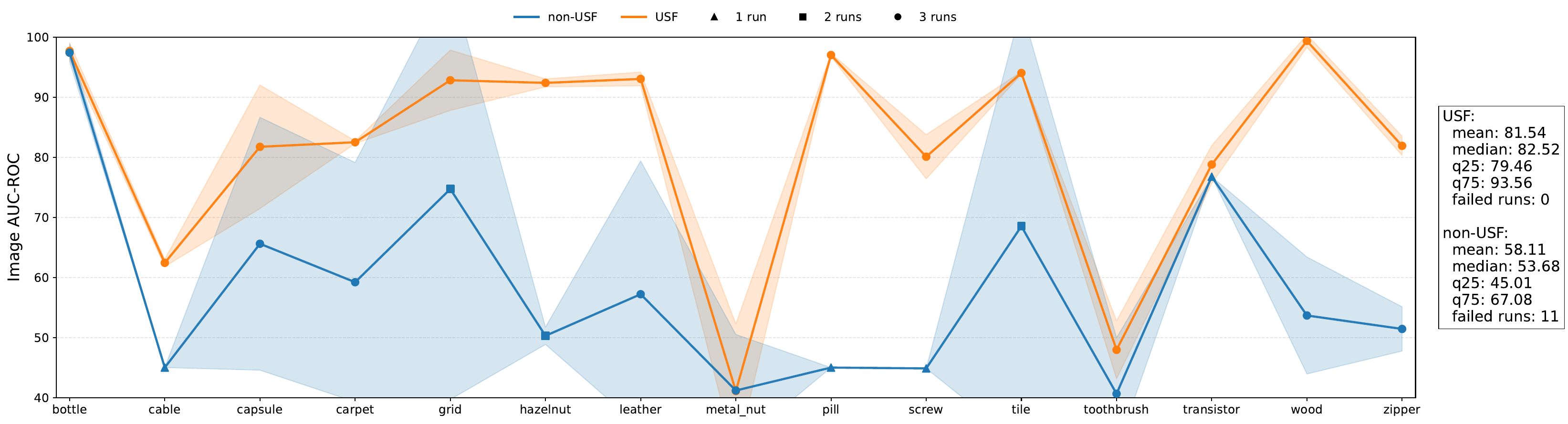}
  \caption{MVTec AD per-class mean image AUC-ROC (\%) $\pm$ stdev. for the VAE-Hybrid setting. USF consistently achieves higher average performance and markedly improved run-to-run performance stability. Only for non-USF we frequently encountered training failures due to numerical instabilities (see Section \ref{sup:nonusf_instabilities}). We indicate the number of valid runs (max 3) per class with distinct markers.}
  \label{fig:vae_results}
\end{figure}

\section{Experiments}\label{sec:experiments}

The objective-level link to Deep SVDD, most notably the spherical alignment of density level sets in the latent space, paired with the observations from our VAE experiments suggest, that USFs may serve as beneficial drop-in replacements within state-of-the art flow-based anomaly detection. 
Recent architectures such as FastFlow~\cite{YuZWLWTW21}, CFlow~\cite{GudovskiyIK22}, and U-Flow~\cite{TailanianPM24} achieve strong performance and are available via the unified Anomalib library~\cite{AkcayAVLAG22}, yet they rely on affine coupling layers that are not uniformly scaling.

To empirically evaluate our hypothesis, we instantiate a USF layer by (1) replacing affine coupling with \emph{additive} coupling (rendering the flow uniformly scaling) and (2) inserting LU-parameterized bijective affine transforms with fixed diagonal magnitudes to recover per-block expressivity.
This US variant is evaluated against its affine (non-US) counterpart across the MVTec AD~\cite{BergmannFSS19} and VisA~\cite{ZouJPZD22} benchmark datasets based on image‐level and pixel-level AUC‐ROC performance, using the three aforementioned model architectures.

\subsection{Experimental Settings}

To isolate the practical impact of substituting affine (non-US) flows with USFs in unsupervised, out-of-the-box deployment, we adopt the architecture and hyperparameter choices from the original papers and refrain from model- or class-specific tuning. As a consequence, absolute performance values are lower than leaderboard reports. However, the within-architecture comparisons are strictly controlled and therefore reflect the causal effect of the flow substitution under various conditions, rather than tuning artifacts. The parameterization of the specific architectures is provided in the following.

\paragraph{CFlow}
Feature maps are drawn from a Wide ResNet-50-2 backbones layers 2–4. The conditional normalizing flow consists of eight coupling blocks, each conditioned on a learned 128-dimensional positional embedding. To prevent numerical instability, the coupling network’s log-scale outputs are clipped to $\pm1.9$ before exponentiation. The learning rate is $1\times 10^{-4}$ without weight decay.

\paragraph{FastFlow}
A ResNet-18 backbone feeds into eight alternating $1\times1$ and $3\times3$ convolutional coupling layers. The internal scale-and-shift subnetworks maintain the same channel width as their inputs, balancing expressivity with efficiency.  Training uses a learning rate of $1\times 10^{-3}$ and weight decay $1\times 10^{-5}$.

\paragraph{U-Flow}
A MS-CaiT backbone produces a U-shaped, multi-scale feature pyramid. Four 2D affine coupling steps at each scale use full-width subnet channels to preserve detailed spatial information. Log-scale predictions are clamped to $\pm2.0$ for stability. The initial learning rate is $1\times 10^{-3}$ with weight decay $1\times 10^{-5}$ and a linear decay to $40\%$ over $25{,}000$ iterations.

All experiments were conducted based on Anomalib v2.0.0, using its PyTorch Lightning implementations. Each model is trained for up to 100 epochs with batch size 32 using Adam, with early stopping after three epochs of no improvement. The feature extractor backbones are initialized and frozen with ImageNet-pretrained weights. Every configuration is run three times and we report mean~$\pm$~stdev. 

\subsection{Experimental Results}\label{sec:results}

\begin{table}[ht]
    \centering
    \begin{adjustbox}{width=\textwidth}
    \begin{tabular}{lccc ccc}
        \toprule
        \multicolumn{1}{c}{} 
        & \multicolumn{3}{c}{\textsc{Base Flow}} 
        & \multicolumn{3}{c}{\textsc{USF}} \\
        \cmidrule(lr){2-4}\cmidrule(lr){5-7}
        Dataset (Metric) 
        & FastFlow & U\,-Flow & CFlow
        & FastFlow & U\,-Flow & CFlow \\
        \midrule
        MVTec AD (Image) 
            & 91.3$\pm$1.5 &\underline{92.7$\pm$4.6} & 72.5$\pm$12.8 
            & 91.0$\pm$0.9 &\B 94.5$\pm$0.6 & 90.8$\pm$1.1 \\
        MVTec AD (Pixel) 
            & 95.8$\pm$0.4 & 95.9$\pm$2.8 & 95.4$\pm$1.0 
            & \underline{96.8$\pm$0.2} &\B 97.0$\pm$0.4 & 96.6$\pm$0.0 \\
        VisA (Image) 
            & \underline{88.0$\pm$1.7} & 84.6$\pm$6.7 & 82.5$\pm$8.3 
            & 87.7$\pm$0.6 & 85.6$\pm$0.6 &\B 88.4$\pm$0.9 \\
        VisA (Pixel) 
            & 96.3$\pm$0.7 & 95.8$\pm$3.8 & 97.8$\pm$0.5 
            & \B 98.8$\pm$0.0 & 95.7$\pm$1.8 & \underline{98.2$\pm$0.0} \\
        \bottomrule
    \end{tabular}
    \end{adjustbox}
    \caption{Mean AUC-ROC and stdev. (\%) over three seeds; dataset values are class-averaged. Highest mean per row in bold, second-highest underlined. Class-specific results are provided in Tables~\ref{tab:visa_allclasses_roc_auc}-~\ref{tab:mvtec_allclasses_pixel_roc_auc}.}
    \label{tab:condensed_main_avgs}
\end{table}

\paragraph{Overview}
Across datasets and granularities, replacing the affine base with uniformly scaling flows (USFs) either improves mean performance or leaves it effectively unchanged, while consistently and substantially reducing run-to-run variance. The effect is stable at both image and pixel level.

At a high level, USFs make the strongest impact where base flows are less stable: they markedly tighten variability (often approaching negligible standard deviations) and lift means most for CFlow and U-Flow, while FastFlow typically maintains its mean and still benefits from improved stability. On MVTec AD, USFs yield the most pronounced consistency gains and deliver the largest mean increase for CFlow; on VisA, the best row-wise means shift from U-Flow toward CFlow (image) and FastFlow (pixel) under USFs, with U-Flow remaining competitive and considerably more stable. Overall, the pattern is uniform: mean performances do not regress, and run-to-run variance contracts substantially.

\begin{table}[htbp!]
    \centering
    \begin{adjustbox}{width=\textwidth}
    \begin{tabular}{lccc ccc}
        \toprule
        \multicolumn{1}{c}{} 
        & \multicolumn{3}{c}{\textsc{Base Flow}} 
        & \multicolumn{3}{c}{\textsc{USF}}  \\
        \cmidrule(lr){2-4}\cmidrule(lr){5-7}
        Class 
        & FastFlow         & U-Flow            & CFlow
        & FastFlow         & U-Flow            & CFlow          \\
        \midrule
        candle  & \B 96.5$\pm$0.3   & 92.7$\pm$1.5   & 91.2$\pm$2.8   & 84.9$\pm$1.1   & 84.1$\pm$2.0   & \underline{94.6$\pm$0.4}   \\
        capsules  & 73.9$\pm$1.9   & 74.2$\pm$6.5   & \underline{77.6$\pm$14.8}   & 75.5$\pm$0.4   & \B 79.7$\pm$0.8   & 70.8$\pm$5.4   \\
        cashew  & 88.4$\pm$2.2   & 93.7$\pm$1.5   & \B 96.7$\pm$1.2   & 90.6$\pm$0.7   & 94.7$\pm$0.5   & \underline{95.9$\pm$0.6}   \\
        chewinggum  & 98.2$\pm$0.3   & 98.5$\pm$0.5   & \B 99.6$\pm$0.1   & \underline{99.4$\pm$0.1}   & 98.6$\pm$0.3   & \B 99.6$\pm$0.1   \\
        fryum  & \underline{89.5$\pm$5.0}   & 88.5$\pm$1.5   & 52.2$\pm$18.6   & \B 92.8$\pm$0.5   & 86.0$\pm$0.2   & 84.9$\pm$0.5   \\
        macaroni1  & 87.8$\pm$2.9   & 79.8$\pm$22.2   & 71.4$\pm$10.7   & \underline{88.8$\pm$0.5}   & 77.6$\pm$0.3   & \B 89.3$\pm$0.2   \\
        macaroni2  & 67.8$\pm$2.5   & 63.5$\pm$13.6   & \underline{78.6$\pm$18.2}   & \B 80.5$\pm$0.5   & 74.8$\pm$1.2   & 71.6$\pm$1.4   \\
        pcb1  & \underline{87.9$\pm$1.4}   & 76.8$\pm$15.2   & 87.8$\pm$5.4   & 83.8$\pm$1.4   & 83.1$\pm$0.2   & \B 92.7$\pm$0.4   \\
        pcb2  & \B 87.7$\pm$0.8   & 80.8$\pm$7.1   & 82.9$\pm$4.7   & 85.7$\pm$0.9   & 76.6$\pm$0.4   & \underline{86.3$\pm$0.8}   \\
        pcb3  & \B 86.9$\pm$1.3   & 80.5$\pm$8.8   & 71.6$\pm$11.5   & 81.4$\pm$0.3   & \underline{85.5$\pm$1.2}   & 79.5$\pm$0.6   \\
        pcb4  & \underline{96.2$\pm$0.2}   & 90.4$\pm$1.8   & 96.1$\pm$1.2   & 92.5$\pm$0.6   & 88.7$\pm$0.5   & \B 97.3$\pm$0.4   \\
        pipe\_fryum  & 94.9$\pm$1.7   & 95.3$\pm$0.6   & 84.8$\pm$9.9   & 96.7$\pm$0.1   & \underline{97.9$\pm$0.1}   & \B 98.6$\pm$0.1   \\
        \midrule
        Average  
            &  88.0$\pm$1.7  & 84.6$\pm$6.7   & 82.5$\pm$8.3   
            & 87.7$\pm$0.6   & 85.6$\pm$0.6   & 88.4$\pm$0.9   \\
        \midrule
        \#Best  
            & 3   & 0   & 2   & 2   & 1   & 5 \\
        \bottomrule
    \end{tabular}
    \end{adjustbox}
    \caption[VisA  ROC-AUC by class]{Mean and stdev. for image  AUC-ROC (\%) on VisA over 3 runs. Highest value per row in bold, second-highest underlined. “\#Best” counts how often each column had the highest value.}
    \label{tab:visa_allclasses_roc_auc}
\end{table}

\begin{table}[htbp!]
    \centering
    \begin{adjustbox}{width=\textwidth}
    \begin{tabular}{lccc ccc}
        \toprule
        \multicolumn{1}{c}{} 
        & \multicolumn{3}{c}{\textsc{Base Flow}} 
        & \multicolumn{3}{c}{\textsc{USF}}  \\
        \cmidrule(lr){2-4}\cmidrule(lr){5-7}
        Class 
        & FastFlow         & U-Flow            & CFlow
        & FastFlow         & U-Flow            & CFlow          \\
        \midrule
        candle  & 98.0$\pm$0.4   & \underline{99.0$\pm$0.1}   & \underline{99.0$\pm$0.0}   & \B 99.1$\pm$0.0   & 94.8$\pm$3.5   & \underline{99.0$\pm$0.0}   \\
        capsules  & 98.0$\pm$0.1   & 95.7$\pm$3.5   & 96.5$\pm$0.7   & \B 99.2$\pm$0.0   & \underline{98.2$\pm$0.1}   & 97.2$\pm$0.0   \\
        cashew  & 98.2$\pm$0.3   & \B 99.4$\pm$0.1   & 98.3$\pm$0.3   & 98.8$\pm$0.0   & \underline{99.3$\pm$0.1}   & 98.4$\pm$0.0   \\
        chewinggum  & 98.9$\pm$0.1   & 99.2$\pm$0.1   & 99.0$\pm$0.2   & \B 99.4$\pm$0.0   & \underline{99.3$\pm$0.2}   & \B 99.4$\pm$0.0   \\
        fryum  & 84.6$\pm$4.6   & 95.5$\pm$0.5   & \B 97.1$\pm$0.8   & \B 97.1$\pm$0.0   & 96.3$\pm$0.1   & \underline{97.0$\pm$0.1}   \\
        macaroni1  & 97.8$\pm$0.7   & 94.0$\pm$9.1   & 97.3$\pm$0.6   & \B 99.7$\pm$0.0   & 88.3$\pm$8.4   & \underline{99.1$\pm$0.0}   \\
        macaroni2  & 94.5$\pm$0.3   & 81.9$\pm$20.2   & 97.1$\pm$1.3   & \B 99.1$\pm$0.0   & 82.6$\pm$8.0   & \underline{98.3$\pm$0.1}   \\
        pcb1  & 99.1$\pm$0.1   & 95.1$\pm$7.1   & \underline{99.3$\pm$0.1}   & 99.2$\pm$0.0   & \B 99.4$\pm$0.0   & \underline{99.3$\pm$0.0}   \\
        pcb2  & 96.7$\pm$0.4   & 97.2$\pm$0.7   & \underline{97.4$\pm$0.4}   & \B 98.3$\pm$0.1   & 96.3$\pm$0.3   & 96.7$\pm$0.0   \\
        pcb3  & 96.6$\pm$0.2   & 97.2$\pm$0.9   & 97.0$\pm$0.4   & \B 98.7$\pm$0.0   & 96.3$\pm$1.4   & \underline{97.3$\pm$0.0}   \\
        pcb4  & 97.5$\pm$0.3   & 96.2$\pm$2.5   & 97.2$\pm$0.6   & 97.2$\pm$0.1   & \B 98.5$\pm$0.0   & \underline{97.7$\pm$0.1}   \\
        pipe\_fryum  & 96.0$\pm$1.0   & 98.9$\pm$0.6   & 98.8$\pm$0.1   & \B 99.2$\pm$0.0   & \B 99.2$\pm$0.0   & \underline{99.0$\pm$0.0}   \\
        \midrule
        Average  
            & 96.3$\pm$0.7  & 95.8$\pm$3.8   & 97.8$\pm$0.5   
            & 98.8$\pm$0.0  & 95.7$\pm$1.8   & 98.2$\pm$0.0   \\
        \midrule
        \#Best  
            & 0   & 1   & 1   & 9   & 3   & 1   \\
        \bottomrule
    \end{tabular}
    \end{adjustbox}
    \caption[VisA  Pixel ROC-AUC by class]{Mean and stdev. for pixel  AUC-ROC (\%) on VisA over 3 runs. Highest value per row in bold, second-highest underlined. “\#Best” counts how often each column had the highest value.}
    \label{tab:visa_allclasses_pixel_roc_auc}
\end{table}

\paragraph{VisA — image-level}
The aggregate trends for VisA carry over to the class-specific performances (Table~\ref{tab:visa_allclasses_roc_auc}).
CFlow sees large gains with a collapse in variance (\emph{fryum} $52.2\pm18.6\!\to\!84.9\pm0.5$, \emph{pcb1} $87.8\pm5.4\!\to\!92.7\pm0.4$), lifting the dataset average from $82.5$ to $88.4$.
U-Flow improves modestly ($84.6\!\to\!85.6$) with a strong stability gain ($6.7\!\to\!0.6$).
FastFlow remains essentially unchanged in mean ($88.0\!\to\!87.7$) but becomes more stable ($1.7\!\to\!0.6$).
USF variants secure more per-class wins (8 vs.\ 5 overall).

\paragraph{VisA — pixel-level}
USF enhances or preserves mean performance and reduces variance (Table~\ref{tab:visa_allclasses_pixel_roc_auc}).
FastFlow gains strongly ($96.3\!\to\!98.8$; stdev $\to 0$), CFlow improves ($97.8\!\to\!98.2$; stdev $\to 0$), and U-Flow is stable in mean with reduced variance ($95.8\!\to\!95.7$, $3.8\!\to\!1.8$).
Illustrative per-class lifts include \emph{macaroni2} for FastFlow ($94.5\!\to\!99.1$) and \emph{pcb3} ($96.6\!\to\!98.7$).
Across classes, USF achieves 13 best scores vs.\ only 2 for the affine counterparts.

\paragraph{MVTec AD — image-level}
Class-wise results (Table~\ref{tab:mvtec_allclasses_roc_auc}) show that USF removes extreme volatility and lifts accuracy in most difficult cases.
For CFlow, significant volatilities virtually disappear while accuracy jumps dramatically (e.g., \emph{metal\_nut} $32.6\pm42.6\!\to\!99.3\pm0.2$, \emph{toothbrush} $35.3\pm26.1\!\to\!96.9\pm1.5$).
U-Flow benefits in both mean and stability (dataset average $92.7\!\to\!94.5$, stdev $4.6\!\to\!0.6$) and achieves the most per-class wins (9/15 vs.\ 5/15 for the affine variant).
FastFlow maintains its mean (91.3 vs.\ 91.0) but still gains per-class robustness and occasionally large improvements (e.g., \emph{screw} $+16.2$ in mean performance).

\begin{table}[htbp!]
    \centering
    \begin{adjustbox}{width=\textwidth}
    \begin{tabular}{lccc ccc}
        \toprule
        \multicolumn{1}{c}{} 
        & \multicolumn{3}{c}{\textsc{Base Flow}} 
        & \multicolumn{3}{c}{\textsc{USF}}  \\
        \cmidrule(lr){2-4}\cmidrule(lr){5-7}
        Class 
        & FastFlow         & U-Flow            & CFlow
        & FastFlow         & U-Flow            & CFlow          \\
        \midrule
        bottle  
            & \B 100.0$\pm$0.0   & \underline{99.8$\pm$0.3}       & 90.7$\pm$8.9   
            & 98.9$\pm$0.2       & \B 100.0$\pm$0.0 & \B 100.0$\pm$0.0       \\
        cable  
            & \underline{91.8$\pm$0.8}  & 85.8$\pm$7.1       & 85.7$\pm$6.8   
            & 71.0$\pm$2.3       & \B 94.0$\pm$0.3         & 86.2$\pm$1.1       \\
        capsule  
            & \underline{88.9$\pm$1.7}  & \B 94.1$\pm$2.1     & 83.5$\pm$7.9   
            & 84.7$\pm$1.2       & 80.3$\pm$0.9           & 87.2$\pm$0.6       \\
        carpet  
            & 98.0$\pm$0.6        & \B 100.0$\pm$0.0    & 78.4$\pm$8.4   
            & 96.8$\pm$0.2       & \underline{99.9$\pm$0.0} & 95.9$\pm$1.3       \\
        grid  
            & 97.7$\pm$0.8        & \underline{98.8$\pm$0.5} & 97.7$\pm$1.5   
            & 95.0$\pm$0.8       & \B 99.3$\pm$0.9         & 72.1$\pm$1.1       \\
        hazelnut  
            & 78.0$\pm$5.7        & 97.7$\pm$1.2       & 65.4$\pm$26.4  
            & \underline{99.5$\pm$0.1} & \B 100.0$\pm$0.0      & \B 100.0$\pm$0.0      \\
        leather  
            & \B 100.0$\pm$0.0     & \B 100.0$\pm$0.0    & 96.4$\pm$5.1   
            & \underline{99.7$\pm$0.1} & \B 100.0$\pm$0.0      & \B 100.0$\pm$0.0      \\
        metal\_nut  
            & 97.1$\pm$1.4        & 82.7$\pm$29.9      & 32.6$\pm$42.6  
            & 95.5$\pm$0.2       & \B 100.0$\pm$0.0      & \underline{99.3$\pm$0.2}       \\
        pill  
            & 92.5$\pm$0.5        & \underline{94.1$\pm$2.9} & 84.5$\pm$4.0   
            & 88.2$\pm$1.0       & \B 96.3$\pm$0.4        & 84.3$\pm$3.7       \\
        screw  
            & 67.7$\pm$2.4        & 71.0$\pm$4.1       & \underline{83.8$\pm$10.7}   
            & \B 83.9$\pm$1.2     & 71.5$\pm$0.8           & 61.9$\pm$4.7       \\
        tile  
            & 97.8$\pm$0.1        & 99.9$\pm$0.1       & 40.3$\pm$8.5   
            & 94.1$\pm$0.9       & \B 100.0$\pm$0.0        & \underline{100.0$\pm$0.0}       \\
        toothbrush  
            & 72.3$\pm$5.5        & 86.5$\pm$9.6       & 35.3$\pm$26.1  
            & \underline{92.7$\pm$0.6} & 91.8$\pm$0.6         & \B 96.9$\pm$1.5       \\
        transistor  
            & \underline{94.3$\pm$1.9} & 84.5$\pm$9.1       & 72.7$\pm$20.1   
            & 84.9$\pm$0.6       & \B 95.7$\pm$1.1        & 86.1$\pm$2.4       \\
        wood  
            & 98.1$\pm$0.3        & \B 99.4$\pm$0.1     & 53.6$\pm$7.4   
            & 98.2$\pm$0.2       & 96.9$\pm$3.2           & \underline{99.1$\pm$0.1}       \\
        zipper  
            & \underline{95.9$\pm$0.1} & \B 95.9$\pm$1.7     & 86.2$\pm$7.3   
            & 81.9$\pm$3.2       & 92.0$\pm$1.1           & 92.8    $\pm$0.2       \\
        \midrule
        Average  
            &  91.3$\pm$1.5  & 92.7$\pm$4.6   & 72.5$\pm$12.8   
            & 91.0$\pm$0.9   & 94.5$\pm$0.6   & 90.8$\pm$1.1   \\
        \midrule
        \#Best  
            & 2   & 5   & 0   
            & 1   & 9   & 4   \\
        \bottomrule
    \end{tabular}
    \end{adjustbox}
    \caption[Unsupervised MVTec AD  ROC-AUC by class]{Mean and stdev. for image AUC-ROC (\%) on MVTec AD over 3 runs. Highest value per row in bold, second-highest underlined. “\#Best” counts how often each column had the highest value.}
    \label{tab:mvtec_allclasses_roc_auc}
\end{table}

\paragraph{MVTec AD — pixel-level}
USF yields consistent improvements for all three models (Table~\ref{tab:mvtec_allclasses_pixel_roc_auc}): U-Flow $95.9\!\to\!97.0$, CFlow $95.4\!\to\!96.6$, FastFlow $95.8\!\to\!96.8$, with strong variance reductions (e.g., U-Flow $2.8\!\to\!0.4$).
Per-class patterns mirror the image-level story: unstable affine cases stabilize and typically improve under USF—
e.g., \emph{metal\_nut} for U-Flow $80.2\pm28.8\!\to\!97.8\pm0.2$ and \emph{screw} for FastFlow $84.5\pm1.9\!\to\!98.3\pm0.1$.
Overall, USF columns produce twice as many best-per-class scores (12 vs.\ 6).

\begin{table}[htbp!]
    \centering
    \begin{adjustbox}{width=\textwidth}
    \begin{tabular}{lccc ccc}
        \toprule
        \multicolumn{1}{c}{} 
        & \multicolumn{3}{c}{\textsc{Base Flow}} 
        & \multicolumn{3}{c}{\textsc{USF}}  \\
        \cmidrule(lr){2-4}\cmidrule(lr){5-7}
        Class 
        & FastFlow         & U-Flow            & CFlow
        & FastFlow         & U-Flow            & CFlow          \\
        \midrule
        bottle  & 97.4$\pm$0.2   & 96.9$\pm$0.9   & 97.7$\pm$0.2   & 97.8$\pm$0.1   & \B 98.7$\pm$0.1   & \underline{98.3$\pm$0.0}   \\
        cable  & 94.9$\pm$0.2   & \underline{95.6$\pm$2.5}   & 94.9$\pm$1.4   & 93.8$\pm$1.1   & \B 97.2$\pm$0.1   & 95.2$\pm$0.1   \\
        capsule  & 97.7$\pm$0.6   & \B 98.8$\pm$0.0   & \underline{98.6$\pm$0.2}   & \B 98.8$\pm$0.0   & 98.1$\pm$0.0   & \B 98.8$\pm$0.0   \\
        carpet  & 98.6$\pm$0.1   & \underline{99.3$\pm$0.1}   & 98.8$\pm$0.1   & 98.7$\pm$0.0   & \B 99.5$\pm$0.0   & 98.8$\pm$0.0   \\
        grid  & \underline{98.0$\pm$0.4}   & 97.8$\pm$0.1   & 97.4$\pm$0.1   & \B 98.3$\pm$0.1   & 97.4$\pm$0.1   & 96.7$\pm$0.1   \\
        hazelnut  & 93.3$\pm$1.0   & \underline{98.9$\pm$0.2}   & 97.0$\pm$0.7   & 98.3$\pm$0.0   & \B 99.4$\pm$0.0   & 98.5$\pm$0.0   \\
        leather  & \B 99.6$\pm$0.0   & 99.3$\pm$0.1   & 99.4$\pm$0.1   & 98.9$\pm$0.2   & \underline{99.5$\pm$0.1}   & 99.1$\pm$0.0   \\
        metal\_nut  & 96.0$\pm$0.3   & 80.2$\pm$28.8   & 92.3$\pm$5.1   & 97.3$\pm$0.1   & \B 97.8$\pm$0.2   & \underline{97.5$\pm$0.1}   \\
        pill  & 96.9$\pm$0.4   & \B 98.2$\pm$0.9   & 97.2$\pm$0.4   & 97.7$\pm$0.0   & 97.8$\pm$0.1   & \underline{98.0$\pm$0.0}   \\
        screw  & 84.5$\pm$1.9   & 96.5$\pm$1.1   & \underline{97.5$\pm$0.4}   & \B 98.3$\pm$0.1   & 93.9$\pm$2.9   & 96.8$\pm$0.1   \\
        tile  & 94.4$\pm$0.3   & \underline{96.8$\pm$0.4}   & 94.2$\pm$0.7   & 92.1$\pm$0.2   & \B 97.0$\pm$0.3   & 95.8$\pm$0.0   \\
        toothbrush  & 96.3$\pm$0.3   & 97.8$\pm$1.6   & 97.6$\pm$0.3   & \underline{98.9$\pm$0.0}   & \B 99.0$\pm$0.1   & 98.4$\pm$0.0   \\
        transistor  & \B 96.4$\pm$0.4   & 87.5$\pm$5.2   & 84.6$\pm$2.2   & \underline{90.8$\pm$0.1}   & 87.3$\pm$0.9   & 85.9$\pm$0.2   \\
        wood  & 95.3$\pm$0.3   & \B 97.1$\pm$0.1   & 87.4$\pm$2.3   & 94.2$\pm$0.2   & \underline{96.7$\pm$0.4}   & 94.3$\pm$0.0   \\
        zipper  & \B 97.7$\pm$0.2   & \underline{97.5$\pm$0.2}   & 96.7$\pm$1.4   & \B 97.7$\pm$0.2   & 95.4$\pm$0.4   & 97.0$\pm$0.0   \\
        \midrule
        Average  
            &  95.8$\pm$0.4  & 95.9$\pm$2.8   & 95.4$\pm$1.0   
            & 96.8$\pm$0.2   & 97.0$\pm$0.4   & 96.6$\pm$0.0   \\
        \midrule
        \#Best  
            & 3   & 3   & 0   & 4   & 7   & 1   \\
        \bottomrule
    \end{tabular}
    \end{adjustbox}
    \caption[MVTec AD  Pixel ROC-AUC by class]{Mean and stdev. for pixel AUC-ROC (\%) on MVTec AD over 3 runs. Highest value per row in bold, second-highest underlined. “\#Best” counts how often each column had the highest value.}
    \label{tab:mvtec_allclasses_pixel_roc_auc}
\end{table}

\paragraph{Discussion}
The observation that USF variants significantly improve or maintain performance and systematically lower run-to-run variance across architectures and datasets aligns with our hypothesis: the input-dependent log-det degree of freedom in affine flows is unnecessary --- and often harmful --- for unsupervised anomaly detection in these backbones.
Removing it re-focuses learning on aligning latent norms with the (isotropic) base and improves numerical conditioning.
Notably, this effect arises even though our replacement relies on additive coupling, which is strictly less expressive per block than the affine alternative~\cite{Draxler_universality_2024}; empirically, the improved inductive bias and stability outweigh the local expressivity gap.
While we observe occasional per-class regressions, the aggregate picture is clear: USF is a drop-in change that yields higher accuracy (especially for CFlow and pixel-level FastFlow) and markedly better reliability with far fewer unstable runs.

\subsection{Ablation Study}\label{sec:ablation}

\begin{figure}[htbp]
  \centering
  \begin{subfigure}[t]{0.49\textwidth}
    \centering
    \includegraphics[width=\textwidth]{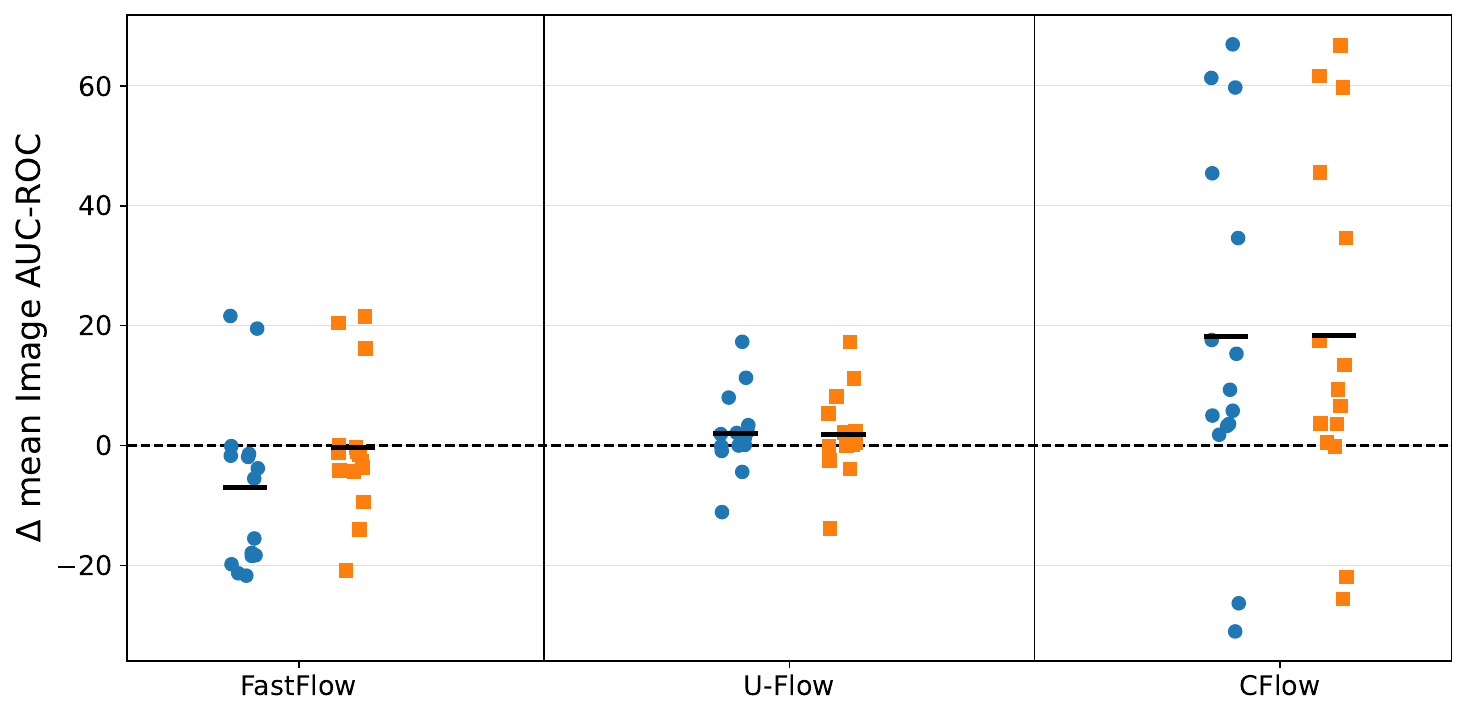}
    \subcaption{$\Delta$ mean image AUC-ROC (higher is better).}
    \label{fig:performance_ablation_distr}
  \end{subfigure}
  \hfill
  \begin{subfigure}[t]{0.49\textwidth}
    \centering
    \includegraphics[width=\textwidth]{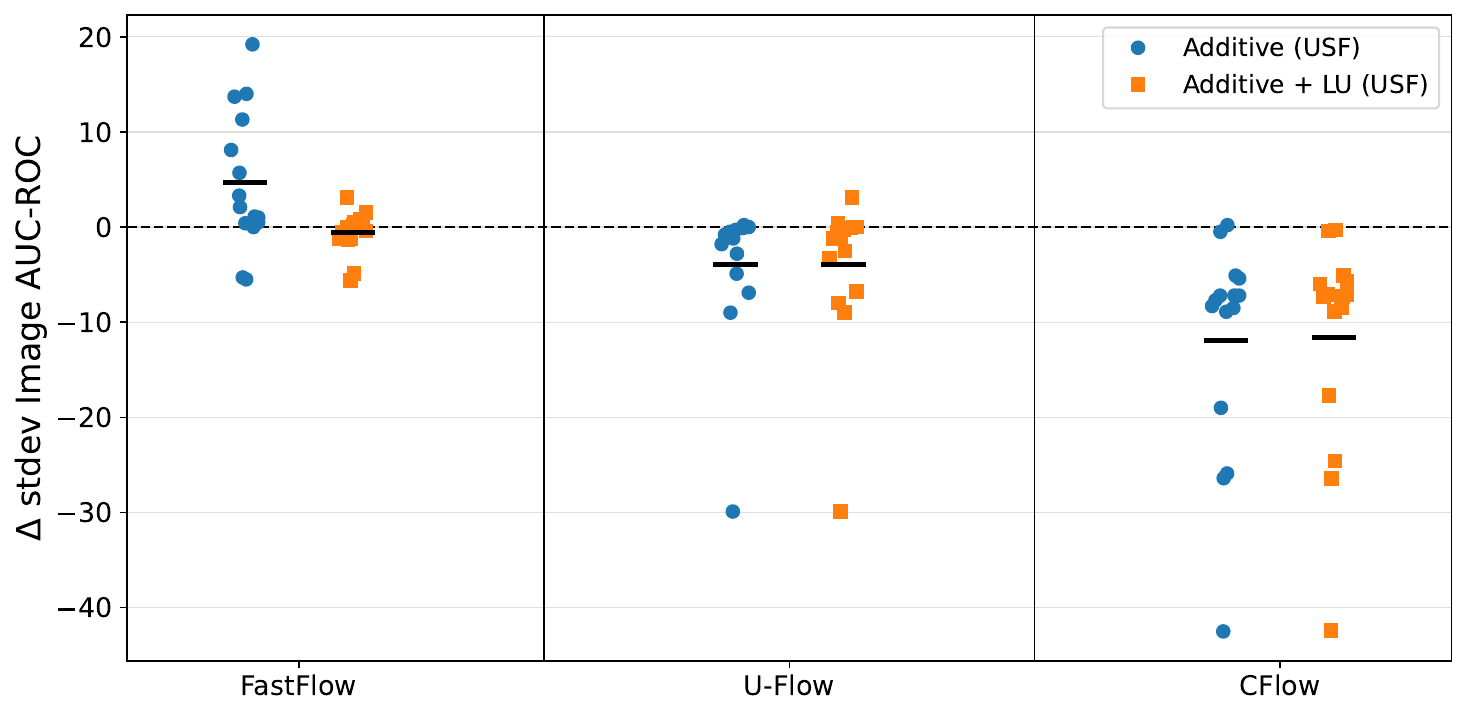}
    \subcaption{$\Delta$ stdev image AUC-ROC (lower is better).}
    \label{fig:performance_ablation_class}
  \end{subfigure}
    \begin{subfigure}[t]{0.49\textwidth}
    \centering
    \includegraphics[width=\textwidth]{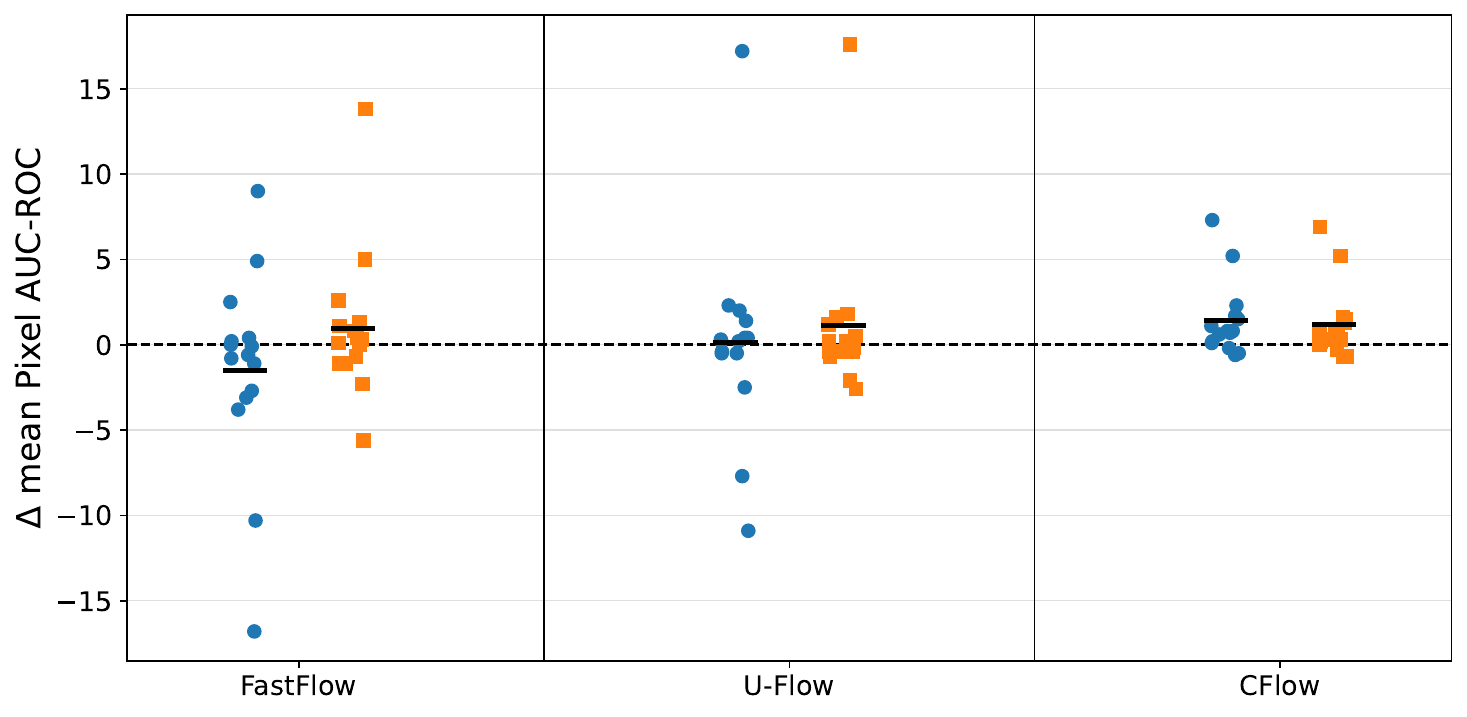}
    \subcaption{$\Delta$ mean pixel AUC-ROC (higher is better).}
    \label{fig:performance_ablation_distr}
  \end{subfigure}
  \hfill
  \begin{subfigure}[t]{0.49\textwidth}
    \centering
    \includegraphics[width=\textwidth]{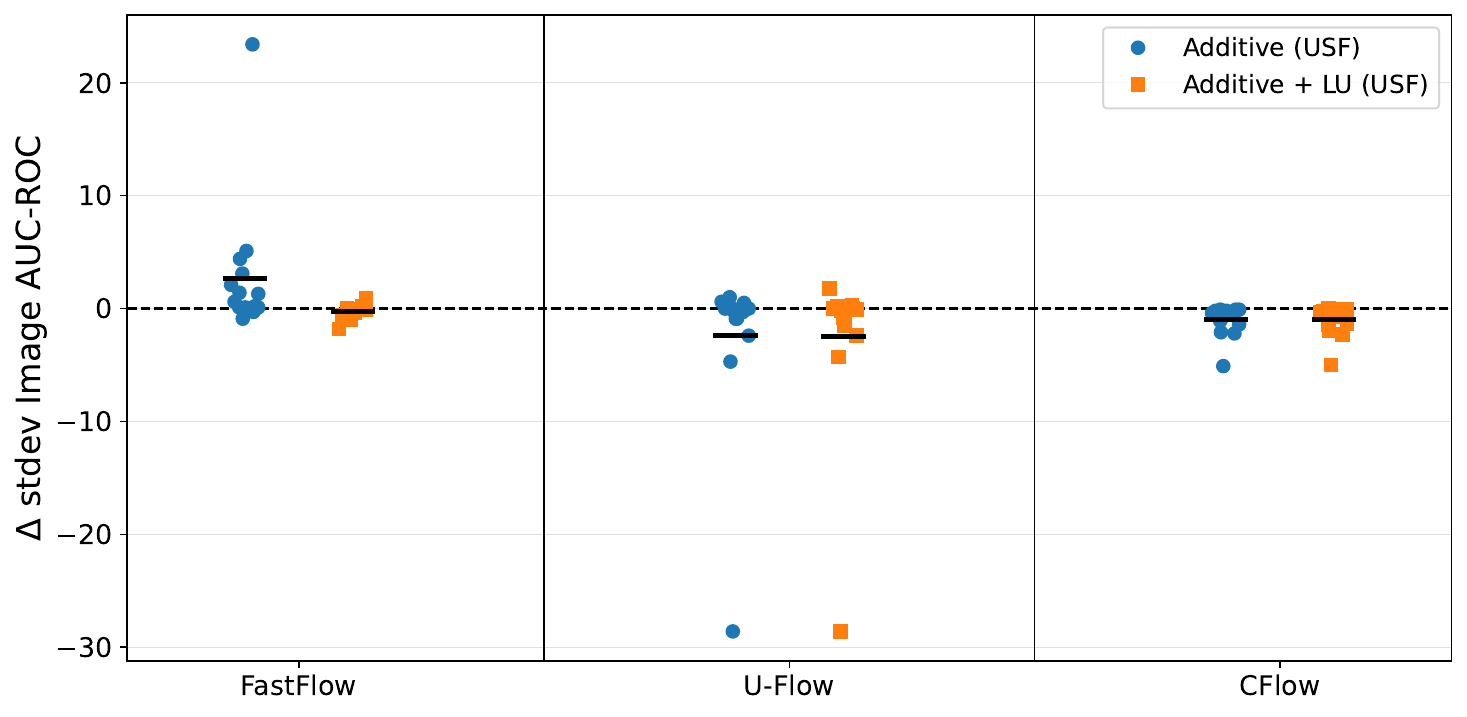}
    \subcaption{$\Delta$ stdev pixel AUC-ROC (lower is better).}
    \label{fig:performance_ablation_class}
  \end{subfigure}
  \caption{Ablation relative to the affine baseline per MVTec AD class. The dashed $y{=}0$ lines mark the affine baseline; the black horizontal mark is the class-wise mean $\Delta$. \emph{Additive (USF)} swaps affine for additive coupling; \emph{Additive + LU (USF)} additionally inserts LU-parameterized affine transforms. }
  \label{fig:performance_ablation}
\end{figure}

To disentangle which architectural change drives the stability and accuracy gains, we analyze changes against the affine baseline for two uniformly-scaling replacements: (1) an additive USF, which only swaps affine for additive coupling; (2) the full USF from the previous section, which introduces the LU-parameterized general bijective affine transforms as well. Figure~\ref{fig:performance_ablation} reports the deviations in mean and standard deviation per-class on MVTec AD for image-level and pixel-level AUC-ROC (\%).

On image-level, both ablations yield modest but consistent gains for U-Flow: the \emph{Additive (USF)} improves mean AUC-ROC by +2.09 while reducing the run-to-run standard deviation by --3.93, and the \emph{Additive + LU (USF)} achieves a similar improvement of +1.83 and --3.96 respectively. In contrast, CFlow exhibits large improvements under both variants: +18.15 mean $\Delta$ / --11.97 stdev. $\Delta$  (Additive) and +18.33 / --11.65 (Additive + LU), reflecting substantially higher accuracy and markedly reduced instability. For FastFlow, the full USF remains essentially on par with the affine baseline (--0.34 /  --0.60), while the USF only swapping affine for additive coupling performs strictly worse (--7.08 / +4.64). The pixel‑level ablation shows the same tendencies with generally smaller separations between USF versions and the non-USF baseline, with some positive outliers (e.g. the mean performance of FastFlow). Overall, these findings indicate that the additive coupling modification is the primary driver of improvements, while the additional LU-based bijective affine transforms prove to be a beneficial add-on, especially to preserve model expressivity in more lightweight architectures like FastFlow.

\section{Training instabilities of NonUSFlows}\label{sup:nonusf_instabilities}
During our experiments, we consistently observed numerical instabilities when training flows that employ affine coupling and LU-decomposed affine transforms on high dimensional data, which lead to failed training runs. An investigation revealed that locally exploding determinants cause the problem. The issue becomes more pronounced with larger networks. While the problem is known when working with affine coupling, it seems to get amplified by the LU layers. We performed limited parameter tuning for critical parameters such as the affine clamping (which cuts the multiplicative part of the output of the affine coupling layers to a predefined range) but we couldn't eliminate the instabilities completely. We assume that this combination requires very careful, potentially problem specific, tuning of hyper parameters such as the prior scale, the affine clamping and parameter initialization, which we couldn't perform during our experiments. In the following, we summarize instabilities that we encountered during our experiments. Note that we observed no such instabilities when working with the USFlows architecture nor when working with affine coupling layers and affine transforms with determinant one (cf. Section \ref{sec:experiments}), such as householder transforms or permutations (originally implemented in Anomalib). It is also noteworthy that the VAE-USFlow experiments and the ablation experiment do not share a common code basis since we implemented additive coupling and LU transforms in Anomalib independently.  

\paragraph{Gaussian Mixture Experiment} During hyperparameter optimization, we encountered two failed runs for NonUSFlows architecture, both for models using 10 coupling blocks. Nevertheless, the resulting model still proved to be very competitive w.r.t. the likelihood-likelihood comparison, which is why we accepted the results of the experiments. We did not observe any problems for the other dimensions.

\paragraph{VAE-NonUSFlows}
VAE-NonUSFlows shows very instable behavior and clearly suboptimal outcomes on MVTec. Figure \ref{fig:vae_results} documents failed runs and worse-than-random performance in detail.

\paragraph{Ablation Study}
We also encountered severe instabilities when using affine coupling and LU transforms in the ablation, in particular the different runs converged to the exact same (suboptimal) solution. Since we could not resolve the issue, we excluded the combination from the ablation. The obtained results on MVTec AD for the U-Flow architecture are reported in Figure~\ref{fig:ablation_full_line}.

\begin{figure}[htp!]
  \centering
    \begin{subfigure}[t]{\textwidth}
  \includegraphics[width=\textwidth]{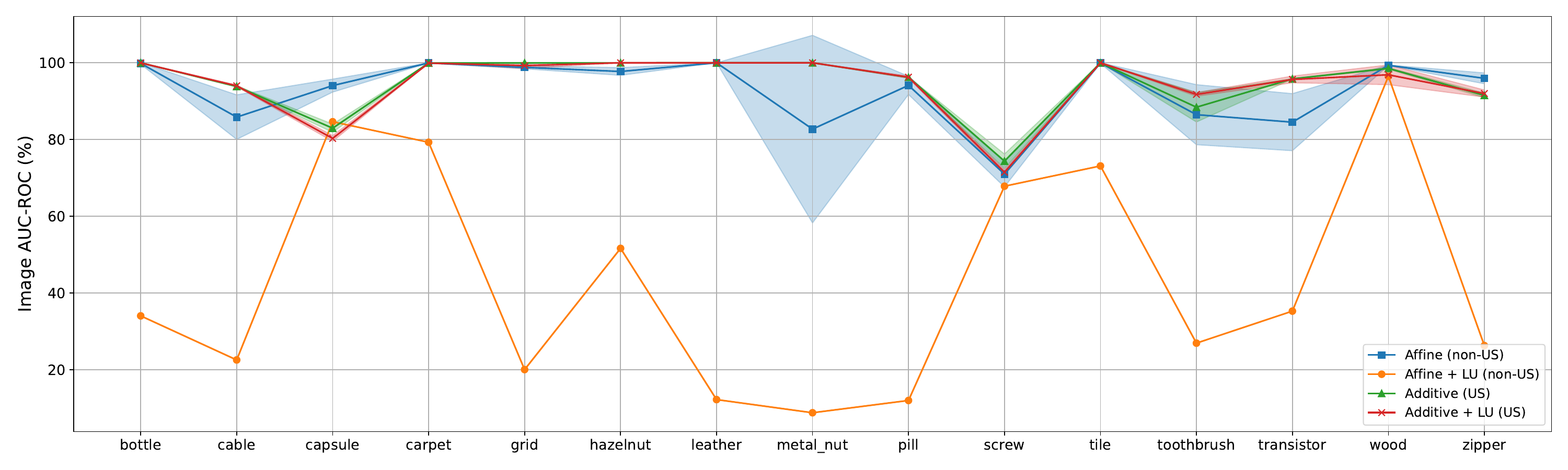}
  \caption{Image AUC-ROC}
  \label{fig:ablation_full_image}
  \end{subfigure}\hfill
  \begin{subfigure}[t]{\textwidth}
  \includegraphics[width=\textwidth]{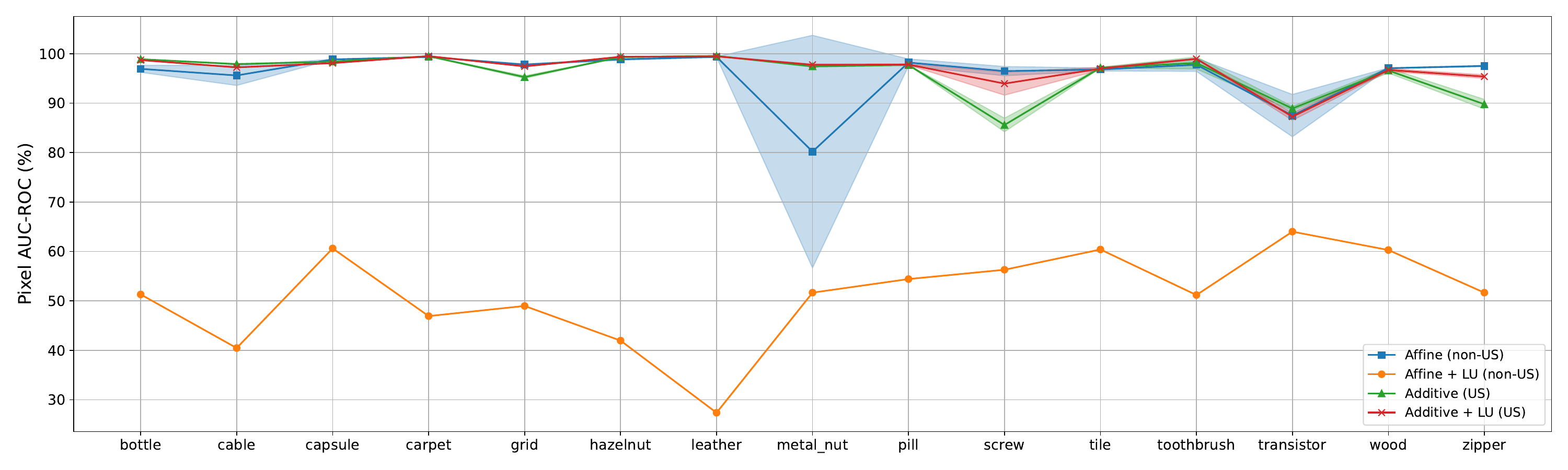}
  \caption{Pixel AUC-ROC}
  \label{fig:ablation_full_pixel}
  \end{subfigure}
  \caption{MVTec AD per-class mean (a) image- and (b) pixel-level AUC-ROC (\%) ± stdev. across 3 runs for the U-Flow architecture when using different flow layers.}
  \label{fig:ablation_full_line}
\end{figure}

\section{Conclusion}

This work establishes a formal equivalence between Deep SVDD and maximum-likelihood training of uniformly scaling flows (USFs), bridging two major paradigms in deep anomaly detection. This theoretical connection reveals that USFs inherently combine the distance-based reasoning of one-class methods with the density faithfulness of flow models, while their constant Jacobian determinant provides a natural regularization against representational collapse.
Empirically, we demonstrated that substituting standard flows with USFs in modern architectures (FastFlow, CFlow, U-Flow) acts as a simple yet effective drop-in replacement, consistently improving performance stability and often boosting accuracy across benchmarks.

From this perspective, several promising directions emerge. The role of the base distribution warrants deeper investigation, such as employing radial distributions defined by different norms (e.g., L1, Laplacian) or learning the norm distribution $P_{\|B\|}$ itself to better match the latent data structure. Furthermore, exploring more flexible base distributions, such as mixtures or heavy-tailed alternatives, could better capture complex, multi-modal normal data characteristics \cite{ZaidNY2024, Draxler_universality_2024}. Ultimately, this theoretical bridge provides a principled foundation for designing more robust, well-calibrated, and performant anomaly detectors.

\subsubsection*{Acknowledgments}
This research was in part funded by the Research Center Trustworthy Data Science and Security (https://rc-trust.ai), one of the Research Alliance centers within the UA Ruhr (https://uaruhr.de).

\bibliography{literature}
\bibliographystyle{plain}

\end{document}